\gdef\isarxiv{1}
\declaretheoremstyle[bodyfont=\itshape,notefont=\bfseries]{thmbf}
\declaretheoremstyle[notefont=\bfseries]{defibf}
\declaretheoremstyle[headfont=\normalfont\itshape,qed=$\square$]{proofita}
\declaretheorem[style=thmbf,numberwithin=section,name={Theorem}]{theorem}
\declaretheorem[style=thmbf,numberlike=theorem,name={Proposition}]{proposition}
\declaretheorem[style=thmbf,numberlike=theorem,name={Lemma}]{lemma}
\declaretheorem[style=thmbf,numbered=no,name={Corollary}]{corollary*}
\declaretheorem[style=defibf,numberlike=theorem,name={Example}]{example}
\declaretheorem[style=defibf,numberlike=theorem,name={Definition}]{definition}
\declaretheorem[style=proofita,numbered=no,name={Proof}]{demo}
\declaretheorem[style=defibf,numbered=no,name={Remark}]{remark}
\newcommand{\exprnn}{\textsc{exprnn}}
\newcommand{\rnn}{\textsc{rnn}}
\newcommand{\lstm}{\textsc{lstm}}
\newcommand{\gru}{\textsc{gru}}
\newcommand{\urnn}{\textsc{urnn}}
\newcommand{\eurnn}{\textsc{eurnn}}
\newcommand{\scornn}{\textsc{scornn}}
\newcommand{\scurnn}{\textsc{scurnn}}
\newcommand{\rgd}{\textsc{rgd}}
\newcommand{\mnist}{\textsc{mnist}}
\newcommand{\pmnist}{\textsc{p-mnist}}
\newcommand{\timit}{\textsc{timit}}
\newcommand{\stft}{\textsc{stft}}
\newcommand{\fft}{\textsc{fft}}
\newcommand{\mse}{\textsc{mse}}
\newcommand{\cnn}{\textsc{cnn}}
\newcommand{\rmsprop}{\textsc{rmsprop}}
\newcommand{\adam}{\textsc{adam}}
\newcommand{\adagrad}{\textsc{adagrad}}
\DeclarePairedDelimiter\abs{\lvert}{\rvert}     % Absolute value
\DeclarePairedDelimiter\pa{\lparen}{\rparen}    % Shortcut for ()
\DeclarePairedDelimiter\cor{\lbrack}{\rbrack}   % Shortcut for []
\DeclarePairedDelimiter\norm{\lVert}{\rVert}    % Norm
\NewDocumentCommand{\infnorm}{ s O{} m }{%
  \IfBooleanTF{#1}{\norm*{#3}}{\norm[#2]{#3}}_{\infty}%
}
\NewDocumentCommand{\twonorm}{ s O{} m }{%
  \IfBooleanTF{#1}{\norm*{#3}}{\norm[#2]{#3}}_2%
}
\NewDocumentCommand{\tvnorm}{ s O{} m }{%
  \IfBooleanTF{#1}{\norm*{#3}}{\norm[#2]{#3}}_{\textup{TV}}%
}
\NewDocumentCommand{\onenorm}{ s O{} m }{%
  \IfBooleanTF{#1}{\norm*{#3}}{\norm[#2]{#3}}_1%
}
\NewDocumentCommand{\frobnorm}{ s O{} m }{%
  \IfBooleanTF{#1}{\norm*{#3}}{\norm[#2]{#3}}_F%
}
\NewDocumentCommand{\scalar}{s O{} >{\SplitArgument{1}{,}}m}{%
    \IfBooleanTF{#1}{\scalaraux*#3}{\scalaraux[#2]#3}%
}
\DeclarePairedDelimiterX{\scalaraux}[2]{\langle}{\rangle}{#1, #2}
\newcommand*\circledaux[1]{\tikz[baseline=(char.base)]{
    \node[shape=circle,draw,inner sep=0.8pt] (char) {#1};}}
\NewDocumentCommand{\circled}{ m o }{%
    \IfNoValueTF{#2}{ \circledaux{#1} }{ \stackrel{\circledaux{#1}}{#2} }%
}
\renewcommand*{\Im}{\mathrm{Im}}                        % Imaginary part
\newcommand*\dif{\mathrm{d}}                            % Differential
\newcommand*\Id{\mathrm{Id}}                            % Identity
\newcommand\grad{\nabla}                                % Gradient
\newcommand\conn{\nabla}                                % Connection
\newcommand*\defi{:=}                                   % Definition
\newcommand*\iso{\cong}                                 % Isomorphic
\newcommand{\code}{\texttt}                             % Inline Code
\newcommand\deriv[1]{\frac{\mathrm{d}}{\mathrm{d}#1}}   % Derivative
\newcommand*\C{\mathbb{C}}                              % Complex
\newcommand*\R{\mathbb{R}}                              % Reals
\newcommand*\Z{\mathbb{Z}}                              % Integers
\newcommand*\N{\mathbb{N}}                              % Naturals
\let\epsilon\varepsilon
\let\subset\subseteq
\newcommand*\deffun[1]{\dodeffunction#1\relax}
\def\dodeffunction#1:#2->#3;#4\relax
\relax\end{aligned}}}
\def\dodeffunctionaux#1->#2\relax{#1&\mapsto#2}
\DeclarePairedDelimiterX\set[1]{\lbrace}{\rbrace}
  {\mathcode`\|="8000 \def|{\:\delimsize\vert\:}#1}
\newcommand\trans[1]{#1^\intercal}                          % Transpose
\DeclareMathOperator\tr{tr}                                 % Trace
\newcommand\E{\R^n}                                         % Vector space we are working with
\newcommand\dual[1]{#1^\ast}                                % Dual
\newcommand\Mani{\mathcal{M}}                               % Manifold
\renewcommand\S{\mathbb{S}}                                 % Sphere
\DeclareMathOperator\gradmani{grad}                         % Gradient on the manifold
\DeclareMathOperator\ad{ad}                                 % Adjoint representation
\DeclareMathOperator\Ad{Ad}                                 % Adjoint operator
\newcommand\g{\mathfrak{g}}                                 % Lie algebra
\DeclareMathOperator{\End}{End}                             % Special Linear group
\newcommand\adj[1]{#1^\ast}                                 % Conjugate transpose
\DeclareMathOperator{\unif}{\mathcal{U}}                    % Uniform distribution
\newcommand\I{\mathrm{I}}                                   % Identity
\NewDocumentCommand{\enorm}{ s O{} m }{%
    \IfBooleanTF{#1}{\norm*{#3}}{\norm[#2]{#3}}_{\E}%
}
\NewDocumentCommand{\denorm}{ s O{} m }{%
    \dual{\IfBooleanTF{#1}{\enorm*{#3}}{\enorm[#2]{#3}}}%
}
\DeclareMathOperator{\Uaux}{U}                           % Unitary group
\DeclareMathOperator{\GLaux}{GL}                         % General Linear group
\DeclareMathOperator{\Oaux}{O}                           % Orthogonal group
\DeclareMathOperator{\SOaux}{SO}                         % Special Orthogonal group
\DeclareMathOperator{\Skewaux}{Skew}                     % Skew-symmetric matrices
\NewDocumentCommand{\U}{ m }{ \Uaux\pa{#1} }
\NewDocumentCommand{\GL}{ m }{ \GLaux\pa{#1} }
\NewDocumentCommand{\Ort}{ m }{ \Oaux\pa{#1} }
\NewDocumentCommand{\SO}{ m }{ \SOaux\pa{#1} }
\NewDocumentCommand{\Skew}{ m }{ \Skewaux\pa{#1} }
\NewDocumentCommand{\M}{ >{\SplitArgument{1}{,}}m}{%
    \R^{\prodaux #1}%
}
\NewDocumentCommand{\commasaux}{ m m }{%
    \IfNoValueTF{#2}{ #1 }{ #1, #2 }%
}
\NewDocumentCommand{\prodaux}{ m m }{%
    \IfNoValueTF{#2}{ #1 \times #1 }{ #1 \times #2 }%
}
\renewcommand\paragraph{\@startsection{paragraph}{4}{\z@}%
                                    {0ex \@plus0.5ex \@minus.2ex}%
                                    {-1em}%
                                    {\normalfont\normalsize\bfseries}}
\icmltitlerunning{Cheap Orthogonal Constraints in Neural Networks}
\begin{document}

\twocolumn[
\icmltitle{Cheap Orthogonal Constraints in Neural Networks:\\
           A Simple Parametrization of the Orthogonal and Unitary Group}

% It is OKAY to include author information, even for blind
% submissions: the style file will automatically remove it for you
% unless you've provided the [accepted] option to the icml2018
% package.

% List of affiliations: The first argument should be a (short)
% identifier you will use later to specify author affiliations
% Academic affiliations should list Department, University, City, Region, Country
% Industry affiliations should list Company, City, Region, Country
% You can specify symbols, otherwise they are numbered in order.
% Ideally, you should not use this facility. Affiliations will be numbered
% in order of appearance and this is the preferred way.
\icmlsetsymbol{equal}{*}

\begin{icmlauthorlist}
\icmlauthor{Mario Lezcano-Casado}{oxmaths}
\icmlauthor{David Mart\'inez-Rubio}{oxcs}
\end{icmlauthorlist}

\icmlaffiliation{oxmaths}{Mathematical Institute, University of Oxford, Oxford, United Kingdom}
\icmlaffiliation{oxcs}{Department of Computer Science, University of Oxford, Oxford, United Kingdom}
\icmlcorrespondingauthor{Mario Lezcano Casado}{mario.lezcanocasado@maths.ox.ac.uk}

% You may provide any keywords that you
% find helpful for describing your paper; these are used to populate
% the "keywords" metadata in the PDF but will not be shown in the document

\icmlkeywords{Optimization, Riemannian Geometry, Gradient Descent, Lie Groups, Recurrent Neural Networks}

\vskip 0.3in
]

% this must go after the closing bracket ] following \twocolumn[ ...

% This command actually creates the footnote in the first column
% listing the affiliations and the copyright notice.
% The command takes one argument, which is text to display at the start of the footnote.
% The \icmlEqualContribution command is standard text for equal contribution.
% Remove it (just {}) if you do not need this facility.

%\printAffiliationsAndNotice{}  % leave blank if no need to mention equal contribution
\printAffiliationsAndNotice{} % otherwise use the standard text.

\begin{abstract}
    %! TEX root = main.tex
We introduce a novel approach to perform first-order optimization with orthogonal and unitary constraints.
This approach is based on a parametrization stemming from Lie group theory through the \emph{exponential map}.
The parametrization transforms the constrained optimization problem into an unconstrained one over a Euclidean space, for which common first-order optimization methods can be used.
The theoretical results presented are general enough to cover the special orthogonal group, the unitary group and, in general, any connected compact Lie group.
We discuss how this and other parametrizations can be computed efficiently through an implementation trick, making numerically complex parametrizations usable at a negligible runtime cost in neural networks.
In particular, we apply our results to \rnn s with orthogonal recurrent weights, yielding a new architecture called \exprnn.
We demonstrate how our method constitutes a more robust approach to optimization with orthogonal constraints, showing faster, accurate, and more stable convergence in several tasks designed to test \rnn s.
\ifdefined\isarxiv
\footnote{Implementation can be found at \url{https://github.com/Lezcano/expRNN}}
\fi

\end{abstract}

%! TEX root = main.tex
\section{Introduction}
Training deep neural networks presents many difficulties. One of the most important is the exploding and vanishing gradient problem, as first observed and studied in~\cite{bengio1994learning}. This problem arises from the ill-conditioning of the function defined by a neural network as the number of layers increase. This issue is particularly problematic in Recurrent Neural Networks (\rnn s). In \rnn s the eigenvalues of the gradient of the recurrent kernel explode or vanish exponentially fast with the number of time-steps whenever the recurrent kernel does not have unitary eigenvalues~\cite{arjovsky2016unitary}. This behavior is the same as the one encountered when computing the powers of a matrix, and results in very slow convergence (vanishing gradient) or a lack of convergence (exploding gradient).

In the seminal paper~\cite{arjovsky2016unitary}, they note that unitary matrices have properties that would solve the exploding and vanishing gradient problems. These matrices form a group called the \emph{unitary group} and they have been studied extensively in the fields of Lie group theory and Riemannian geometry. Optimization methods over the unitary and orthogonal group have found rather fruitful applications in \rnn s in recent years (\cf{} \Cref{sec:related}).

In parallel to the work on unitary \rnn s, there has been an increasing interest for optimization over the orthogonal group and the Stiefel manifold in neural networks~\cite{harandi2016generalized,ozay2016optimization,huang2017orthogonal,bansal2018can}. As shown in these papers, orthogonal constraints in linear and \cnn{} layers can be rather beneficial for the generalization of the network as they act as a form of implicit regularization. The main problem encountered while using these methods in practice was that optimization with orthogonality constraints was neither simple nor computationally cheap. We aim to close that bridge.

In this paper we present a simple yet effective way to approach problems that present orthogonality or unitary constraints. We build on results from Riemannian geometry and Lie group theory to introduce a parametrization of these groups, together with theoretical guarantees for it.

This parametrization has several advantages, both theoretical and practical:
\begin{enumerate}[itemsep=0mm]
    \item It can be used with general purpose optimizers.
    \item The parametrization does not create additional minima or saddle points in the main parametrization region.
    \item It is possible to use a structured initializer to take advantage of the structure of the eigenvalues of the orthogonal matrix.
    \item Other approaches need to enforce hard orthogonality constraints, ours does not.
\end{enumerate}
Most previous approaches fail to satisfy one or many of these points. The parametrization in~\cite{helfrich18a} and~\cite{maduranga2018complexur} comply with most of these points but they suffer degeneracies that ours solves (\cf{} Remark in~\Cref{sec:approximations}). We compare our architecture with other methods to optimize over $\SO{n}$ and $\U{n}$ in the remarks in~\Cref{sec:theory,sec:exprnn}.

\paragraph{High-level idea}
The matrix exponential maps skew-symmetric matrices to orthogonal matrices transforming an optimization problem with orthogonal constraints into an unconstrained one. We use Pad\'e approximants and the scale-squaring trick to compute machine-precision approximations of the matrix exponential and its gradient. We can implement the parametrization with negligible overhead observing that it does not depend on the batch size.

\paragraph{Structure of the Paper}
In~\Cref{sec:theory}, we introduce the parametrization and present the theoretical results that support the efficiency of the exponential parametrization. In~\Cref{sec:exprnn}, we explain the implementation details of the layer. Finally, in~\Cref{sec:experiments}, we present the numerical experiments confirming the numerical advantages of this parametrization.

\section{Related Work}\label{sec:related}
\paragraph{Riemannian gradient descent.}
There is a vast literature on optimization methods on Riemannian manifolds, and in particular for matrix manifolds, both in the deterministic and the stochastic setting. Most of the classical convergence results from the Euclidean setting have been adapted to the Riemannian one~\cite{absil2009optimization,bonnabel2013stochastic,boumal2016global,zhang2016riemannian,sato2017riemannian}. On the other hand, the problem of adapting popular optimization algorithms like \rmsprop~\cite{tieleman2012lecture}, \adam~\cite{kingma2014adam} or \adagrad~\cite{duchi2011adaptive} is a topic of current research~\cite{Roy_2018_CVPR,becigneul2018riemannian}.

\paragraph{Optimization over the Orthogonal and Unitary groups.}
The first formal study of optimization methods on manifolds with orthogonal constraints (Stiefel manifolds) is found in the thesis~\cite{Smith:1993:GOM:165579}. These ideas were later simplified in the seminal paper~\cite{edelman1998geometry}, where they were generalized to Grassmannian manifolds and extended to get the formulation of the conjugate gradient algorithm and the Newton method for these manifolds. After that, optimization with orthogonal constraints has been a central topic of study in the optimization community. A rather in depth literature review of existing methods for optimization with orthogonality constraints can be found in~\cite{jiang2015framework}. When it comes to the unitary case, the algorithms used in practice are similar to those used in the real case, \cf~\cite{manton2002optimization,abrudan2008steepest}.

\paragraph{Unitary \rnn s.}
The idea of parametrizing the matrix that defines an RNN by a unitary matrix was first proposed in~\cite{arjovsky2016unitary}. Their parametrization centers on a matrix-based fast Fourier transform-like (\fft) approach. As pointed out in~\cite{jing2017tunable}, this representation, although efficient in memory, does not span the whole space of unitary matrices, giving the model reduced expressiveness. This second paper solves this issue in the same way it is solved when computing the \fft---using $\log(n)$ iterated butterfly operations. A different approach to perform this optimization was presented in~\cite{wisdom2016full,vorontsov2017orthogonality}. Although not mentioned explicitly in either of the papers, this second approach consists of a retraction-based Riemannian gradient descent via the Cayley transform. The paper~\cite{hyland2017learning} proposes to use the exponential map on the complex case, but they do not perform an analysis of the algorithm or provide a way to approximate the map nor the gradients. A third approach has been presented in~\cite{mhammedi2017efficient} via the use of Householder reflections. Finally, in~\cite{helfrich18a} and the follow-up~\cite{maduranga2018complexur}, a parametrization of the orthogonal group via the use of the Cayley transform is proposed. We will have a closer look at these methods and their properties in~\Cref{sec:theory,sec:exprnn}.

%! TEX root = main.tex
\section{Parametrization of Compact Lie Groups}\label{sec:theory}
For a much broader introduction to Riemannian geometry and Lie group theory see ~\Cref{sec:appendix_background}. We will restrict our attention to the special orthogonal~\footnote{Note that we consider just matrices with determinant equal to one, since the full group of orthogonal matrices $\Ort{n}$ is not connected, and hence, not amenable to gradient descent algorithms.} and unitary case, but the results in this section can be generalized to any connected compact matrix Lie group equipped with a bi-invariant metric. We prove the results for general connected compact matrix Lie groups in~\Cref{sec:appendix_comparison}.

\subsection{The Lie algebras of \texorpdfstring{$\SO{n}$}{SO(n)} and \texorpdfstring{$\U{n}$}{U(n)}}~\label{sec:lie_exponential_map}
We are interested in the study of parametrizations of the special orthogonal group
\[
    \SO{n} = \set{B \in \M{n} | \trans{B}B = \I,\, \det\pa{B} = 1}
\]
and the unitary group
\[
    \U{n} = \set{B \in \C^{n \times n} | \adj{B}B = \I}.
\]
These two sets are compact and connected Lie groups. Furthermore, when seen as submanifolds of $\M{n}$ (resp.\ $\C^{n \times n}$) equipped with the metric induced from the ambient space $\scalar{X, Y} = \tr\pa{\trans{X}Y}$ (resp.\ $\tr\pa{\adj{X}Y}$), they inherit a \emph{bi-invariant metric}, meaning that the metric is invariant with respect to left and right multiplication by matrices of the group. This is clear given that the matrices of the two groups are isometries with respect to the metric on the ambient space.

We call the tangent space at the identity element of the group the \emph{Lie algebra} of the group. For the two groups of interest, their Lie algebras are given by
\begin{gather*}
    \mathfrak{so}(n) = \set{A \in \M{n} | A + \trans{A} = 0},\\
    \mathfrak{u}(n) = \set{A \in \C^{n \times n} | A + \adj{A} = 0},
\end{gather*}
That is, the skew-symmetric and the skew-Hermitian matrices respectively. Note that these two spaces are isomorphic to a vector space. For example, for $\mathfrak{so}\pa{n}$, the isomorphism is given by
\[
    \deffun{\alpha : \R^{\frac{n\pa{n-1}}{2}} -> \mathfrak{so}\pa{n} ; A -> A - \trans{A}}
\]
where we identify $A \in \R^{\frac{n(n-1)}{2}}$ with an upper triangular matrix with zeros in the
diagonal.

\subsection{Parametrizing \texorpdfstring{$\SO{n}$}{SO(n)} and \texorpdfstring{$\U{n}$}{U(n)}}
In the theory of Lie groups there exists a tight connection between the structure of the Lie algebra and the geometry of the Lie group. One of the most important tools that is used to study one in terms of the other is the \emph{Lie exponential map}. The Lie exponential map on matrix Lie groups with a bi-invariant metric is given by the exponential of matrices. If we denote the group by $G$ (which would be $\SO{n}$ or $\U{n}$ in this case) and its Lie algebra by $\g$, we have the mapping $\exp : \g \to G$ defined as
\[
     \exp(A) \defi \I + A + \tfrac{1}{2}A^2 + \cdots
\]
This mapping is not surjective in general. On the other hand, there are particular families of Lie groups in which the exponential map is, in fact, surjective. Compact Lie groups are one of such families.
\begin{theorem}\label{thm:surjective}
    The Lie exponential map on a connected, compact Lie group is surjective.
\end{theorem}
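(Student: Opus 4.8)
The plan is to recognize the Lie exponential map as the Riemannian exponential map of a bi-invariant metric and then appeal to the Hopf--Rinow theorem. First I would equip $G$ with a bi-invariant metric, which exists on any compact Lie group: averaging any inner product on the Lie algebra $\g$ over the adjoint action $\Ad$ of $G$ (using the Haar measure) yields an $\Ad$-invariant inner product, and extending it by left translation produces a metric invariant under both left and right multiplication. The averaging integral converges precisely because $G$ is compact. For $\SO{n}$ and $\U{n}$ this recovers the metric induced from the ambient trace inner product discussed above.

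The key step is to show that, for a bi-invariant metric, the geodesics through the identity $\I$ are exactly the one-parameter subgroups $t \mapsto \exp(tX)$ with $X \in \g$. This reduces to computing the Levi-Civita connection $\conn$ on left-invariant vector fields. Feeding the Koszul formula the infinitesimal form of $\Ad$-invariance, $\langle [X,Y],Z\rangle + \langle Y,[X,Z]\rangle = 0$, collapses it to the identity $\conn_X Y = \tfrac{1}{2}[X,Y]$ for left-invariant $X, Y$. In particular $\conn_X X = \tfrac{1}{2}[X,X] = 0$, so every one-parameter subgroup solves the geodesic equation $\conn_{\dot\gamma}\dot\gamma = 0$, and the Riemannian exponential at $\I$ coincides with the Lie exponential $\exp$.

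With this identification, surjectivity becomes a completeness argument. A compact Riemannian manifold is complete as a metric space, so Hopf--Rinow guarantees that $G$ is geodesically complete and, more importantly, that any two of its points are joined by a minimizing geodesic. Given an arbitrary $g \in G$, connectedness together with Hopf--Rinow provides a minimizing geodesic from $\I$ to $g$; after rescaling its initial velocity this geodesic takes the form $t \mapsto \exp(tX)$ for some $X \in \g$ with $\exp(X) = g$. Hence $\exp$ attains every element of $G$.

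The step I expect to be most delicate is the connection computation: deriving $\conn_X Y = \tfrac{1}{2}[X,Y]$ requires carefully combining the Koszul formula with the infinitesimal bi-invariance identity, and one must keep track of the distinction between left-invariant vector fields and the velocity field along a single one-parameter subgroup when verifying the geodesic equation. The two supporting ingredients, the existence of a bi-invariant metric and the Hopf--Rinow theorem, are classical and can simply be cited.
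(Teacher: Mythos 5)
Your proof is correct and follows essentially the same route as the paper's: construct a bi-invariant metric by averaging over the adjoint action, use the Koszul formula to get $\conn_X Y = \tfrac{1}{2}[X,Y]$ and hence identify one-parameter subgroups with geodesics through the identity, then conclude surjectivity from Hopf--Rinow. The only cosmetic difference is that the paper derives completeness from homogeneity of the metric (so its statement applies to any connected Lie group with a bi-invariant metric), whereas you derive it from compactness, which suffices for the theorem as stated.
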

\begin{proof}
We give a short self-contained proof of this classical result in~\Cref{sec:appendix_comparison}. We give an alternative, less abstract proof of this fact for the groups $\SO{n}$ and $\U{n}$ as a corollary of~\Cref{prop:bianalytic} in~\Cref{sec:appendix_fundamental_domain}.
\end{proof}

Both $\SO{n}$ and $\U{n}$ are compact and connected, so this result applies to them. As such, the exponential of matrices gives a complete parametrization of these groups.

\subsection{From Riemannian to Euclidean optimization}
In this section we describe some properties of the exponential parametrization which make it a sound choice for optimization with orthogonal constraints in neural networks.

Fix $G$ to be $\SO{n}$ or $\U{n}$ equipped with the metric\footnote{Note that in the real case we have that $\adj{A} = \trans{A}$.} $\scalar{X,Y} = \tr\pa{\adj{X}Y}$ and let $\g$ be its Lie algebra (the space of skew-symmetric or skew-Hermitian matrices). The exponential parametrization satisfies the following properties.

\paragraph{It can be used with general purpose optimizers.}
The exponential parametrization allows us to \emph{pullback} an optimization problem from the group $G$ back to the Euclidean space. If we have a problem
\begin{equation}\label{eq:min_riemann}
     \min_{B \in G} f(B)
\end{equation}
this is equivalent to solving
\begin{equation}\label{eq:min_euclid}
    \min_{A \in \g} f(\exp(A)).
\end{equation}
We noted in~\Cref{sec:lie_exponential_map} that $\g$ is isomorphic to a Euclidean vector space, and as such we can use regular gradient descent optimizers like \adam{} or \adagrad{} to approximate a solution to problem~\eqref{eq:min_euclid}.

A rather natural question to ask is whether using gradient-based methods to approximate the solution of problem~\eqref{eq:min_euclid} would give a sensible solution to problem~\eqref{eq:min_riemann}, given that precomposing with the exponential map might change the geometry of the problem. If the parametrization is, for example not locally unique, this might degrade the gradient flow and affect the performance of the gradient descent algorithm. In this section we will show theoretically that this parametrization has rather desirable properties for a parametrization of a manifold. We will confirm that these properties have a positive effect on the convergence of the gradient descent algorithms when compared with other parametrizations when applied to real problems in~\Cref{sec:experiments}.

\paragraph{It does not change the minimization problem.}
It is clear that a minimizer $\widehat{B}$ for problem~\eqref{eq:min_riemann} and a minimizer $\widehat{A}$ for problem~\eqref{eq:min_euclid} will be related by the equation $\widehat{B} = \exp(\widehat{A})$, since the exponential map is surjective, so if we find a solution to the second problem we will have a solution to the first one.

\paragraph{It acts as a change of metric on the group.}
If the parametrization did not induce a change of metric on the manifold it could mean that it would induce saddle points, which would potentially slow down the convergence of the optimization algorithm.

A map $\deffun{\phi : \mathcal{M} -> \mathcal{N};}$ with $\mathcal{N}$ a Riemannian manifold induces a metric on a differentiable manifold $\mathcal{M}$ whenever it is an immersion, that is, its differential is injective. The Lie exponential is not just an immersion, it is bi-analytic on an open neighborhood around the origin. The image of this neighborhood is sufficiently large to cover almost all the Lie group.
\begin{proposition}\label{prop:bianalytic}
    Let $G$ be $\SO{n}$ or $\U{n}$. The exponential map is analytic, invertible, with analytic inverse on a bounded open neigborhood $V$ of the origin and $\exp(V)$ covers almost all $G$ in the sense that the whole group lies in the closure of $\exp(V)$.
\end{proposition}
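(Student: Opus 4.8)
The plan is to build an explicit inverse of $\exp$ using the principal matrix logarithm and to identify its domain with a dense open subset of $G$. Analyticity of $\exp$ is immediate, since the defining power series $\I + A + \tfrac12 A^2 + \cdots$ converges absolutely and uniformly on bounded sets. For the neighborhood I would take
\[
    V \defi \set{A \in \g | \twonorm{A} < \pi},
\]
the ball of radius $\pi$ in the operator norm, which is manifestly open and bounded. The structural fact I would exploit is that every $A \in \g$ is normal (skew-symmetric or skew-Hermitian), so its operator norm equals its spectral radius; hence $V$ consists exactly of those $A$ all of whose purely imaginary eigenvalues $i\theta_j$ satisfy $\theta_j \in (-\pi,\pi)$.

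Next I would pin down the image. Since $A \in V$ is normal, $\exp(A)$ is a normal (indeed orthogonal or unitary) matrix with eigenvalues $e^{i\theta_j}$, $\theta_j \in (-\pi,\pi)$, so $-1$ is never an eigenvalue; thus $\exp(V) \subseteq G \setminus U_{-1}$, where $U_{-1} \defi \set{B \in G | \det(B+\I) = 0}$ is the set of group elements admitting $-1$ as an eigenvalue. Conversely, any $B \in G \setminus U_{-1}$ is normal with spectrum avoiding the closed negative real axis, so the principal matrix logarithm $\log(B)$ is well defined through the holomorphic functional calculus and is skew-symmetric/skew-Hermitian with eigenvalues in $i(-\pi,\pi)$, i.e.\ $\log(B) \in V$. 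To get invertibility I would show $\log$ is a two-sided inverse of $\exp|_V$: because $A$ commutes with $\exp(A)$, the two are simultaneously diagonalizable, and applying the principal branch recovers $A$ from $\exp(A)$ precisely because $\theta_j \in (-\pi,\pi)$; this simultaneously yields $\log \circ \exp = \Id$ on $V$ and injectivity of $\exp|_V$. Hence $\exp$ restricts to a bijection $V \to G \setminus U_{-1}$.

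For analyticity of the inverse I would use the Cauchy integral representation $\log(B) = \tfrac{1}{2\pi i}\oint_\Gamma \log(z)\,(z\I - B)^{-1}\,\dif z$, with $\Gamma$ a contour enclosing the spectrum of $B$ but avoiding the branch cut; the integrand depends analytically on $B$, so $\log$ is analytic on $G \setminus U_{-1}$. Equivalently, one checks that $\dif\exp_A$ is nonsingular for every $A \in V$, since no two eigenvalues $i\theta_j, i\theta_k$ with $\theta_j,\theta_k \in (-\pi,\pi)$ can differ by a nonzero integer multiple of $2\pi i$, and then invokes the analytic inverse function theorem together with injectivity. Finally, to see that $\exp(V)$ covers almost all of $G$, I would observe that $U_{-1}$ is the zero set of the analytic function $B \mapsto \det(B+\I)$, which does not vanish at $\I$ (there $\det(2\I) = 2^n \neq 0$); on the connected manifold $G$ a nonconstant analytic function has nowhere-dense zero set, so $U_{-1}$ is closed with empty interior and $\exp(V) = G \setminus U_{-1}$ is dense, giving $G \subseteq \overline{\exp(V)}$.

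The main obstacle is the invertibility claim rather than any of the softer topological points. Because two elements of $V$ need not commute, one cannot argue injectivity naively eigenvalue-by-eigenvalue across distinct matrices; the clean resolution is to route everything through the single matrix $M = \exp(A)$ and its canonically defined principal logarithm, which is determined by $M$ alone and therefore forces $A = \log(M) = B$ whenever $\exp(A) = \exp(B)$. Getting the branch bookkeeping exactly right — ensuring that the spectral radius bound $\twonorm{A} < \pi$ is precisely what keeps every eigenphase inside the single-valuedness strip $(-\pi,\pi)$ and away from the cut at $-1$ — is where the care is needed.
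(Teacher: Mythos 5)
Your proof is correct, but it follows a genuinely different route from the paper's. The paper (Theorem~\ref{prop:domain_of_definition}) works at the level of the differential: it factors $\pa{\dif\exp}_A = e^A\phi\pa{\ad_A}$ with $\phi(z)=(1-e^{-z})/z$, computes the eigenvalues of $\ad_A$ to be the differences $\lambda_i-\lambda_j$, concludes that $\dif\exp$ is singular exactly when two eigenvalues of $A$ differ by a nonzero multiple of $2\pi i$, and gets injectivity from diagonalizability plus injectivity of $e^z$ on the strip $\abs{\Im(z)}<\pi$; density of the image is then read off from the block-diagonal normal forms of orthogonal and skew-symmetric matrices. You instead exhibit the inverse explicitly as the principal matrix logarithm, identify the image exactly as $G\setminus U_{-1}$ (the complement of the locus where $-1$ is an eigenvalue), obtain injectivity by routing through the canonically defined $\log(M)$, and get density from the observation that $U_{-1}$ is the zero set of the non-identically-vanishing analytic function $B\mapsto\det(B+\I)$ on the connected manifold $G$. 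Your argument is more concrete and arguably cleaner on the density step (the paper's measure-zero remark via normal forms is terser), and your spectral-radius description of $V$ coincides with the paper's $U$ because elements of $\g$ are normal. The trade-off is generality: your construction leans on normality of $\g$ and on the unit-circle spectrum to pin down the branch cut at $-1$, so it is tailored to $\SO{n}$ and $\U{n}$, whereas the paper's singularity analysis of $\dif\exp$ is stated and proved for arbitrary compact connected matrix Lie groups (and your fallback route to analyticity of the inverse---nonsingularity of $\dif\exp_A$ because no two eigenphases in $(-\pi,\pi)$ differ by $2\pi k$, $k\neq 0$---is precisely the paper's central computation, so the two proofs meet there).
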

\begin{demo}
    See \Cref{sec:appendix_fundamental_domain}.
\end{demo}
This proposition assures that, as long as the optimization problem stays in the neighborhood $V$,
the representation of the matrices in $G$ is unique, so this parametrization is not creating
spurious minima. Furthermore, given that it is a diffemorphism, it is not creating saddle points on
$V$ either. Additionally, on this neighborhood, we have the adjoint of $\dif \exp$ with
respect to the metric, that is,
\[
    \scalar{\dif \exp (X), Y} = \scalar{X, \dif \exp^\ast (Y)}.
\]
This is the map that induces the new metric on $G$, through the pushforward of the canonical metric
from the Lie algebra into the Lie group. As such, the optimization process using our
parametrization can be seen as Riemannian gradient descent using this new metric, and all the
existent results developed for optimization over manifolds apply to this setting.

\begin{remark}
We saw empirically that whenever the initialization of the skew-symmetric matrix starts in $V$, the optimization path throughout all the training epochs does not leave $V$. For this reason, in practice the exponential parametrization behaves as a change of metric on the Lie group.
\end{remark}

\paragraph{The induced metric is different to the classic one.}
The standard first order optimization technique to solve problem~\eqref{eq:min_riemann} is given by Riemannian gradient descent~\cite{absil2009optimization}. In the Riemannian setting, we have the \emph{Riemannian exponential map} $\exp_B$ which maps lines that pass through the origin on the tangent space $T_B G$ to geodesics on $G$ that pass through $B$. In the special orthogonal or unitary case, when we choose the metric induced by the canonical metric on the ambient space, for a function defined on the ambient space, this translates to the update rule
\[
    B \gets B\exp\pa{-\eta\adj{B}\gradmani f(B)}
\]
for a learning rate $\eta > 0$, where $\exp$ is the exponential of matrices and $\gradmani f(B)$ denotes the gradient of the function restricted to $G$. We deduce this formula in~\Cref{ex:riemannian_gradient_descent_son}.

Computing the Riemannian exponential map exactly is computationally expensive in many practical situations. For this reason, approximations are in order. Retractions are of particular interest.
\begin{definition}[Retraction]
A retraction $r$ for a manifold $\Mani$ is defined as a family of functions $\deffun{r_x : T_x \Mani -> \Mani;}$ for every $x \in \Mani$ such that
\[
    r_x(0) = x \qquad \text{and} \qquad \pa{\dif r_x}_0 = \Id.\qedhere
\]
\end{definition}
In other words, retractions are a first order approximation of the Riemannian exponential map. A study of the convergence properties of first and second-order optimization algorithms when using retractions can be found in~\cite{boumal2016global}. In the case of $G$, we have that a way to form retractions is to choose a function $\deffun{\phi : \g -> G;}$ such that it is a first order approximation of the exponential of matrices and its image lies in $G$. Then, the update rule is given by
\[
    B \gets B\phi\pa{-\eta\adj{B}\gradmani f(B)}.
\]

\begin{remark}
    For the special orthogonal and the unitary group, one such function is the \emph{Cayley map}
    \[
        \phi(A) = \pa{\I + \tfrac{1}{2}A}\pa{\I - \tfrac{1}{2}A}^{-1}.
    \]
    This justifies theoretically the optimization methods used in~\cite{wisdom2016full,vorontsov2017orthogonality} and extends their work, given that all their architectures can still be applied with different retractions for these manifolds. In~\Cref{sec:approximations} we give examples of more involved retractions, and in~\Cref{sec:cheap} we explain why it is computationally cheap to use machine-accuracy approximants to compute the exponential map both in our approach and in the Riemannian gradient descent approach. Examples of other retractions and a deeper treatment of these objects can be found in~\Cref{sec:retractions}.
\end{remark}

The update rule for the exponential parametrization induces a retraction-like map for $A \in \g$
\[
    e^A \gets \exp\pa{A - \eta\grad\pa{f \circ \exp}(A)},
\]
where the gradient is the gradient with respect to the Euclidean metric, that is, the regular gradient, given that $f \circ \exp$ is defined on a Euclidean space.
A natural question that arises is whether this new update rule defines a retraction. It turns out that this map is not a retraction for $\SO{n}$ or $\U{n}$.
\begin{proposition}
    The step-update map induced by the exponential parametrization is not a retraction for $\SO{n}$ if $n > 2$ nor for $\U{n}$ if $n > 1$.
\end{proposition}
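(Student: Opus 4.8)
The plan is to extract from the update rule the candidate retraction map and show that its differential at the origin fails to be the identity at suitably chosen base points. Writing $B = \exp(A)$ with $A\in\g$, the Euclidean gradient of $f\circ\exp$ obeys $\grad\pa{f\circ\exp}(A) = \dif\exp_A^\ast\pa{\gradmani f(B)}$ by the chain rule, where $\dif\exp_A^\ast$ is the adjoint of $\dif\exp_A$ with respect to the ambient metric. Hence, setting $\xi = -\eta\,\gradmani f(B)\in T_B G$, the update $e^A\gets\exp\pa{A-\eta\grad\pa{f\circ\exp}(A)}$ reads $B\mapsto\tilde r_B(\xi)$ with
\[
  \tilde r_B(\xi) = \exp\pa{A + \dif\exp_A^\ast(\xi)}, \qquad A = \exp^{-1}(B),
\]
which is well defined for $B\in\exp(V)$ by~\Cref{prop:bianalytic}. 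The condition $\tilde r_B(0)=B$ holds trivially, so everything hinges on whether $\pa{\dif\tilde r_B}_0 = \dif\exp_A\circ\dif\exp_A^\ast$ equals the identity on $T_B G$.

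First I would recall the classical formula $\dif\exp_A(H) = \exp(A)\,F(\ad_A)(H)$ with $F(z)=\frac{1-e^{-z}}{z}$, and compute the adjoint. Since $\exp(A)\in G$ is an isometry of the ambient metric and $\ad_A$ is skew-adjoint on $\g$ (this is precisely the $\ad$-invariance encoding the bi-invariance of the metric), one obtains $\dif\exp_A^\ast(V)=F(-\ad_A)\pa{\exp(-A)V}$, using that $F$ has real coefficients so that $F(\ad_A)^\ast = F(\ad_A^\ast)=F(-\ad_A)$. Composing, for $V=\exp(A)W$ with $W\in\g$,
\[
  \pa{\dif\tilde r_B}_0(V) = \exp(A)\,F(\ad_A)F(-\ad_A)(W),
\]
and since $W\mapsto\exp(A)W$ is a bijection $\g\to T_B G$, the identity $\pa{\dif\tilde r_B}_0=\Id$ is equivalent to $F(\ad_A)F(-\ad_A)=\Id$ on $\g$.

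The heart of the argument is then a functional-calculus computation. A direct expansion gives $F(z)F(-z)=G(z)$ where $G(z)\defi\frac{2\pa{\cosh z-1}}{z^2}$. Because $A\in\g$ is skew-Hermitian, $\ad_A$ is skew-adjoint and hence diagonalizable over $\C$ with purely imaginary eigenvalues $i\theta$; on such an eigenvalue $G$ takes the value $G(i\theta)=\frac{2\pa{1-\cos\theta}}{\theta^2}$. A one-variable estimate (the function $\theta\mapsto 2(1-\cos\theta)-\theta^2$ vanishes at $\theta=0$ and is strictly negative elsewhere, since its derivative $2\sin\theta-2\theta\le 0$ for $\theta\ge 0$) shows $G(i\theta)=1$ if and only if $\theta=0$. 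Therefore $G(\ad_A)=\Id$ if and only if every eigenvalue of $\ad_A$ vanishes, i.e.\ if and only if $\ad_A=0$, i.e.\ if and only if $A$ lies in the center of $\g$.

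It then remains to produce non-central elements in each case. For $\mathfrak{so}(n)$ with $n>2$ the center is trivial, and for $\mathfrak{u}(n)$ with $n>1$ the center is only the line of scalar matrices $\set{it\I | t\in\R}$; in both cases $\g$ strictly contains its center, so there exists $A\in V$ with $\ad_A\neq 0$. At $B=\exp(A)$ we then have $\pa{\dif\tilde r_B}_0\neq\Id$, so $\tilde r$ is not a retraction. (For $\mathfrak{so}(2)$ and $\mathfrak{u}(1)$ the algebra is abelian, $\ad_A\equiv 0$, and the map is a genuine retraction, which is exactly why these cases are excluded.) I expect the main obstacle to be the adjoint computation: one must track carefully the left translation by $\exp(A)$ and keep straight which inner product each adjoint is taken against, and verify that bi-invariance indeed yields the skew-adjointness of $\ad_A$ needed for $F(\ad_A)^\ast=F(-\ad_A)$.
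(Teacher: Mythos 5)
Your proof is correct, and it reaches the crux by a genuinely different route than the paper. Both arguments set up the problem identically, reducing the retraction condition to $\pa{\dif\exp}_A\circ\pa{\dif\exp}_A^\ast = \Id$ on $T_BG$, equivalently $F(\ad_A)F(-\ad_A)=\Id$ on $\g$ with $F(z)=\frac{1-e^{-z}}{z}$; the paper phrases this as ``the Lie exponential must be a local isometry.'' From there the paper argues geometrically (\Cref{thm:retract}): a local isometry preserves sectional curvature, $\g$ is flat, and the bi-invariant metric has $\kappa(X,Y)=\frac{1}{4}\norm{[X,Y]}^2$, so the bracket must vanish identically and $G$ must be abelian; the proposition follows since $\SO{n}$ for $n>2$ and $\U{n}$ for $n>1$ are non-abelian. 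You instead evaluate $F(\ad_A)F(-\ad_A)=\frac{2\pa{\cosh(\ad_A)-\Id}}{\ad_A^2}$ spectrally: $\ad_A$ is skew-adjoint with purely imaginary eigenvalues $i\theta$, the operator acts on each eigenspace by $\frac{2(1-\cos\theta)}{\theta^2}$, and this equals $1$ only at $\theta=0$, forcing $\ad_A=0$. Your route is more elementary (no Koszul formula or curvature tensor needed) and more quantitative: it exhibits the exact distortion factor of the induced metric along each eigendirection of $\ad_A$ and pinpoints precisely which base points $e^A$ (those with $A$ non-central) witness the failure. The paper's curvature argument buys the clean structural statement that the obstruction is exactly non-abelianness and connects the proposition to the theme that the parametrization is a genuine change of metric. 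One cosmetic remark: you only need the existence of some $A$ with $\ad_A\neq 0$, i.e.\ non-abelianness of the algebra, rather than the precise centers of $\mathfrak{so}(n)$ and $\mathfrak{u}(n)$, though what you state about them is correct.
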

\begin{demo}
    It is a corollary of~\Cref{thm:retract}, where we give necessary and sufficient conditions for this map to be a retraction when defined on a compact, connected matrix Lie group.
\end{demo}

This tells us that the metric induced by the $\log$ map on $\SO{n}$ and $\U{n}$ is intrinsically different to the canonical metric on these manifolds when seen as submanifolds of $\R^{n \times n}$ (resp.\ $\C^{n \times n}$). In particular, it changes the geodesic flow defined by the metric.

%! TEX root = main.tex
\section{Numerical Implementation}\label{sec:exprnn}
As an application of this framework we show how to model an orthogonal (or unitary) recurrent neural network with it, that is, an \rnn{} whose recurrent matrix is orthogonal (or unitary). We also show how to implement numerically the ideas of the last section.

\subsection{Exponential \rnn{} Architecture}
Given a sequence of inputs $\pa{x_t} \subset \R^d$, we define an orthogonal exponential \rnn{} (\exprnn) with hidden size $p > 0$ as
\[
    h_{t+1} = \sigma\pa{\exp(A) h_t + Tx_{t+1}}
\]
where $A \in \Skew{p}$, $T \in \M{p,d}$, and $\sigma$ is some fixed non-linearity. In our experiments we chose the \code{modrelu} nonlinearity, as introduced in~\cite{arjovsky2016unitary}. Note that generalizing this architecture to the complex unitary case simply accounts for considering $A$ to be skew-Hermitian rather than skew-symmetric. We stayed with the real case because we did not observe any improvement in the empirical results when using the complex case.

\subsection{Approximating the exponential of matrices}\label{sec:approximations}
There is a myriad of methods to approximate the exponential of a matrix~\cite{moler2003nineteen}. Riemannian gradient descent over $\SO{n}$ requires that the result of the approximation is orthogonal. If not, the error would accumulate after each step making the resulting matrix deviate from the orthogonality constraint exponentially fast in the number of steps. On the other hand, the approximation of the exponential in our parametrization does not require orthogonality. This allows many other approximations of the exponential function. The requirement is removed because the orthogonal matrix is implicitly represented as an exponential of a skew-symmetric matrix. The loss of orthogonality in Riemannian gradient descent is due to storing an orthogonal matrix and updating it directly.

\paragraph{Pad\'e approximants.}
Pad\'e approximants are rational approximations of the form $\exp(A) \approx p_n(A)q_n(A)^{-1}$ for polynomials $p_n, q_n$ of degree $n$. A Pad\'e approximant of degree $n$ agrees with the Taylor expansion of the exponential to degree $2n$. The Cayley transform is the Pad\'e approximant of degree $1$. These methods and their implementations are described in detail in~\cite{higham2009scaling}.

\paragraph{Scale-squaring trick.}
The error of the Pad\'e approximant scales as $\mathcal{O}\pa{\norm{A}^{2n+1}}$. If $\norm{A} > 1$ and we have an approximant $\phi$, the scale-squaring trick accounts for computing $\phi\pa{\frac{A}{2^k}}^{2^k}$ for the first $k\in\N$ such that $\frac{\norm{A}}{2^k} < \frac{1}{2}$.
Most types of approximants, like Pad\'e's or a truncated Taylor expansion of the exponential, can be coupled with the scale-squaring trick to reduce the error~\cite{higham2009scaling}.

\begin{remark}\label{remark:cayley}
    Given that the Cayley transform is a degree $1$ Pad\'e approximant of the exponential, if we choose this approximant without the scale-squaring trick we essentially recover the parametrization proposed in~\cite{helfrich18a}. The Cayley transform suffers from the fact that, if the optimum has $-1$ as an eigenvalue, the weights of the corresponding skew-symmetric matrix will tend to infinity. The parametrization in ~\cite{helfrich18a} is the Cayley transform multiplied by a diagonal matrix $D$, but the parametrization still has the same problem, it just moves it to a different eigenvalue. \Cref{prop:bianalytic} assures that the exponential parametrization does not suffer from this problem.

    The follow-up work~\cite{maduranga2018complexur} mitigates this problem learning the diagonal of $D$ as well, but by doing so it loses the local unicity of the parametrization. \Cref{prop:bianalytic} assures that the exponential parametrization is not only locally unique, but also differentiable with differentiable inverse, thus inducing a metric.
\end{remark}

\begin{remark}
    It is straightforward to show that a degree $n$ Pad\'e approximant combined with the scaling-squaring trick also maps the skew-symmetric matrices to the special orthogonal matrices. This constitutes a much more precise retraction than the Cayley map at almost no extra computational cost. This observation can be used to improve the precision of the method proposed in~\cite{wisdom2016full,vorontsov2017orthogonality}.
\end{remark}

\paragraph{Exact approximation.} Combining the methods above we can get an efficient approximation of the exponential to machine-precision. The best one known to the authors is based on the paper~\cite{al2009new}. It accounts for an efficient use of the scaling-squaring trick and a Pad\'e approximant. This is the algorithm that we use on the experiments section to approximate the exponential.

\paragraph{Exact gradients.} A problem often encountered in practice is that of biased gradients. Even though an approximation might be good, it can significantly bias the gradient. An example of this would be trying to approximate the function $f \equiv 0$ on $[0, 1]$ by the functions $f_n(x) = \frac{\sin(2\pi nx)}{n}$. Even though $f_n \to f$ uniformly, their derivatives do not converge to zero. This problem is rather common when using involved parametrizations, for example, those coming from Chebyshev polynomials. On the other hand, the gradient can be implemented separately using a machine-precision formula.
\begin{proposition}\label{prop:grad_son}
    Let $A \in \Skew{n}$. For a function $\deffun{f : \R^{n \times n} -> \R;}$, denote the matrix $B = e^A$. We have that
    \[
        \grad\pa{f \circ \exp}(A) = B \pa{\dif \exp}_{-A}\pa{\tfrac{1}{2}\pa{\trans{\grad f(B)}B - \trans{B}\grad f(B)}}.
    \]
\end{proposition}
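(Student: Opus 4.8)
The plan is to use the chain rule and then to transport the differential $\dif\exp$ across the trace inner product by computing its adjoint. Write $\phi = f\circ\exp$ and $G = \grad f(B)$, where $B = e^A$. For any direction $E\in\Skew{n}$, the chain rule together with the definition of the (Euclidean) gradient on the Lie algebra gives
\[
    \scalar{\grad\phi(A),E} = \scalar{G,(\dif\exp)_A(E)}.
\]
Everything then reduces to rewriting the right-hand side as $\scalar{\Xi,E}$ for an explicit skew-symmetric $\Xi$, which is then the desired gradient.

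The tool I would use is the classical integral representation
\[
    (\dif\exp)_A(E) = \int_0^1 e^{sA}E\,e^{(1-s)A}\,\dif s.
\]
Plugging this in, expanding $\scalar{G,\cdot} = \tr(\trans G\,\cdot)$, and using cyclicity of the trace, I would move the two exponential factors around $E$; together with the skew-symmetry $\trans A = -A$ this identifies the adjoint of $(\dif\exp)_A$ as $(\dif\exp)_A^\ast = (\dif\exp)_{\trans A} = (\dif\exp)_{-A}$. Hence
\[
    \scalar{G,(\dif\exp)_A(E)} = \scalar{(\dif\exp)_{-A}(G),E}.
\]

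Since $E$ is constrained to the Lie algebra, the gradient is the orthogonal projection of $(\dif\exp)_{-A}(G)$ onto $\Skew{n}$, namely $\tfrac12\big((\dif\exp)_{-A}(G) - \trans{(\dif\exp)_{-A}(G)}\big)$. The final step is purely algebraic. Transposing the integral and using $\trans A = -A$ shows $\trans{(\dif\exp)_{-A}(G)} = (\dif\exp)_A(\trans G)$, and the commutation identity $B\,(\dif\exp)_{-A}(Y) = \int_0^1 e^{uA}Y\,e^{-uA}\,\dif u$ (obtained by left-multiplying the integral by $B=e^A$ and substituting $u=1-s$) lets me factor $B$ out on the left and collapse the two terms into a single $(\dif\exp)_{-A}$ applied to the skew-symmetric combination of $\trans B\,G$ and $\trans G\,B$. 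Up to the orientation convention fixing the sign, this combination is $\tfrac12\big(\trans{\grad f(B)}\,B - \trans B\,\grad f(B)\big) = -\trans B\,\gradmani f(B)$, giving the stated closed form $\grad\phi(A) = B\,(\dif\exp)_{-A}\big(\tfrac12(\trans{\grad f(B)}B - \trans B\,\grad f(B))\big)$.

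I expect the difficulty to lie in the final bookkeeping rather than in any single idea: the adjoint computation is routine once the integral formula is available, but recognizing that the transpose term is itself a differential $(\dif\exp)_A(\trans G)$, and that both pieces route through a common $B\,(\dif\exp)_{-A}(\cdot)$ with a skew argument, takes care. The most delicate points are the overall sign and the placement of the left factor $B$; I would pin these down with the commutation identity above and sanity-check the result on $\SO{2}$, where $\exp$ and the relevant matrices commute and both sides can be evaluated in closed form.
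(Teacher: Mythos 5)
Your argument is correct in substance but follows a genuinely different route from the paper's. The paper obtains the formula by combining two ingredients: \Cref{prop:gradient}, which gives $\grad\pa{f \circ \exp}(A) = B\pa{\dif \exp}_{-A}\pa{B^{-1}\grad f(B)}$ for a function on the group by taking the adjoint of the factorization $\pa{\dif \exp}_A = \dif L_{e^A} \circ \phi\pa{\ad_A}$ and using the skew-adjointness of $\ad$ from \Cref{lemma:ad_unitary}; and the orthogonal projection $\gradmani f(B) = \tfrac{1}{2}\pa{\grad f(B) - B\trans{\grad f(B)}B}$ from \Cref{ex:riemannian_gradient_descent_son}, which converts the ambient gradient into the Riemannian one before feeding it to that formula. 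You instead stay entirely in the ambient space $\R^{n \times n}$: the integral representation $\pa{\dif\exp}_A(E) = \int_0^1 e^{sA}E\,e^{(1-s)A}\,\dif s$ plus cyclicity of the trace gives the ambient adjoint $\pa{\dif\exp}_A^\ast = \pa{\dif\exp}_{-A}$ for skew $A$, and you project onto $\Skew{n}$ only at the end. This is more elementary and self-contained (no Lie-algebraic machinery, no bi-invariance argument), at the price of being specific to $\SO{n}$ and $\U{n}$, whereas the paper's argument works verbatim on any compact matrix Lie group with a bi-invariant metric. All the individual identities you invoke do check out, including $\trans{\pa{\dif\exp}_{-A}(G)} = \pa{\dif\exp}_A\pa{\trans{G}}$ and the commutation identity.

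The one point you must not leave to an ``orientation convention'' is the final sign: it is completely determined by your own computation. Writing $G = \grad f(B)$, your two pieces are $\pa{\dif\exp}_{-A}(G) = B\pa{\dif\exp}_{-A}\pa{\trans{B}G}$ and $\trans{\pa{\dif\exp}_{-A}(G)} = \pa{\dif\exp}_A\pa{\trans{G}} = B\pa{\dif\exp}_{-A}\pa{\trans{G}B}$, so the projection $\tfrac{1}{2}\pa{\pa{\dif\exp}_{-A}(G) - \trans{\pa{\dif\exp}_{-A}(G)}}$ collapses to $B\pa{\dif\exp}_{-A}\pa{\tfrac{1}{2}\pa{\trans{B}G - \trans{G}B}}$, i.e.\ with the two terms in the opposite order to the displayed statement. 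A check at $A = 0$ confirms this: the gradient there is the skew part $\tfrac{1}{2}\pa{G - \trans{G}}$, while the displayed right-hand side reduces to $\tfrac{1}{2}\pa{\trans{G} - G}$. Your sign also agrees with what \Cref{prop:gradient} combined with the projection formula actually produces, so the discrepancy lies in the printed statement rather than in your derivation; but as written, your proof papers over this with a vague appeal to convention instead of resolving it, and that is the gap you should close.
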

\begin{proof}
    It follows from the discussion in~\Cref{ex:riemannian_gradient_descent_son} and~\Cref{prop:gradient} in the supplementary material.
\end{proof}

The differential of the exponential of matrices $\pa{\dif \exp}_A$ can be approximated to machine-precision computing the exponential of a $2n \times 2n$ matrix~\cite{al2009computing}. We use this algorithm in conjunction with~\Cref{prop:grad_son} to implement the gradients.

\subsection{Parametrizations are computationally cheap.}\label{sec:cheap}
At first sight, one may think that an exact computation of the exponential and its gradient in neural networks is rather expensive. This is not the case when the exponential is just used as a parametrization. The value---and hence the gradient---of a parametrization does not depend on the training examples used to compute the stochastic gradient. For this reason, in order to compute the gradient of a function $\grad\pa{f \circ \phi}(A)$ with $B = \phi(A)$, we can first let the auto-differentiation engine compute the stochastic gradient of $f$ with respect to $B$, that is, $\grad f(B)$. The value $\grad f(B)$ depends on the batch size and the number of appearances of $B$ as a subexpression in the neural network (think of a recurrent kernel in an \lstm). We can use $\grad f(B)$ to compute---just once per batch---the gradient $\grad \pa{f \circ \phi}(A)$, for example with the formula given in~\Cref{prop:grad_son} for $\phi = \exp$. This allows the user to implement rather complex parametrizations, like the one we showed, without a noticeable runtime penalty. For instance, for an \rnn{} with batch size $b$, sequences of average length $\ell$, and a hidden size of $n$, in each iteration one needs to compute $bln$ matrix-vector products at the cost of $O(n^2)$ operations each. The overhead incurred using the exponential parametrization is the computation of two matrix exponentials that run in $O(n^3)$, which is negligible in comparison. In practice, with an \exprnn{} of size $512$, we did not observe any noticeable time penalty when using this parametrization trick with respect to not imposing orthogonality constraints at all.

\subsection{Intialization}
For the initialization of the layer with a matrix $A_0 \in \Skew{p}$, we drew ideas from both~\cite{henaff2016recurrent} and~\cite{helfrich18a}. Both initializations sample blocks of the form
\[
\begin{pmatrix}
    0 & s_i \\
    -s_i & 0
\end{pmatrix}.
\]
for $s_i$ i.i.d.\ and then form $A_0$ as a block-diagonal matrix with these blocks.

The \emph{Henaff} initialization consists of sampling $s_i \sim \unif\cor{-\pi, \pi}$. This defines a block-diagonal orthogonal matrix $e^A$ with uniformly distributed blocks on the corresponding torus of block-diagonal $2 \times 2$ rotations. We sometimes found that the sampling presented in~\cite{helfrich18a} performed better. This initialization, which we call \emph{Cayley}, accounts for sampling $u_i \sim \unif\cor{0, \frac{\pi}{2}}$ and then setting $s_i = -\sqrt{\frac{1 - \cos(u_i)}{1+\cos(u_i)}}$, thus biasing the eigenvalues towards $0$.

We chose as the initial vector $h_0 = 0$ for simplicity, as we did not observe any empirical improvement when using the initialization given in~\cite{arjovsky2016unitary}.

%! TEX root = main.tex
\section{Experiments}\label{sec:experiments}
In this section we compare the performance of our parametrization for orthogonal \rnn s with the following approaches:
\ifdefined\isarxiv
    \begin{itemize}[itemsep=0mm]
        \item Long short-term memory (\lstm). \\ \cite{hochreiter1997long}
        \item Unitary \rnn{} (\urnn). \\ \cite{arjovsky2016unitary}
        \item Efficient Unitary \rnn{} (\eurnn). \\ \cite{jing2017tunable}
        \item Cayley Parametrization (\scornn). \\ \cite{helfrich18a}
        \item Riemannian Gradient Descent (\rgd). \\ \cite{wisdom2016full}
    \end{itemize}
\else
Long short-term memory (\lstm), Unitary \rnn{} (\urnn), Efficient Unitary \rnn{} (\eurnn), Cayley Parametrization (\scornn), Riemannian Gradient Descent (\rgd) which can be found in \cite{hochreiter1997long}, \cite{arjovsky2016unitary}, \cite{jing2017tunable}, \cite{helfrich18a} and \cite{wisdom2016full} respectively.
\fi

\begin{figure*}[!tbp]
\vskip 0.2in
\begin{center}
  \begin{minipage}[b]{\columnwidth}
      \centerline{\includegraphics[width=0.83\columnwidth]{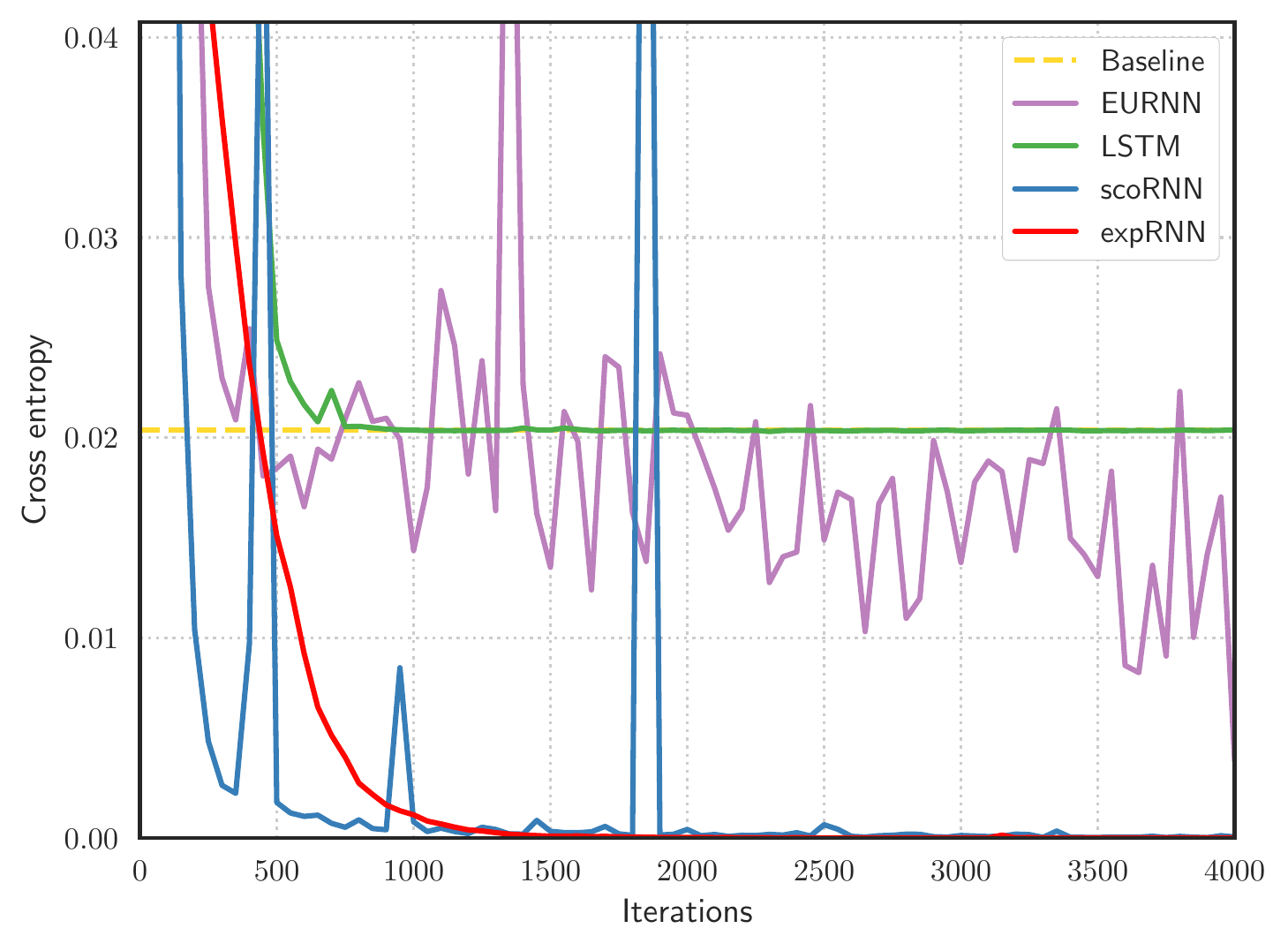}}
  \end{minipage}
  \begin{minipage}[b]{\columnwidth}
      \centerline{\includegraphics[width=0.83\columnwidth]{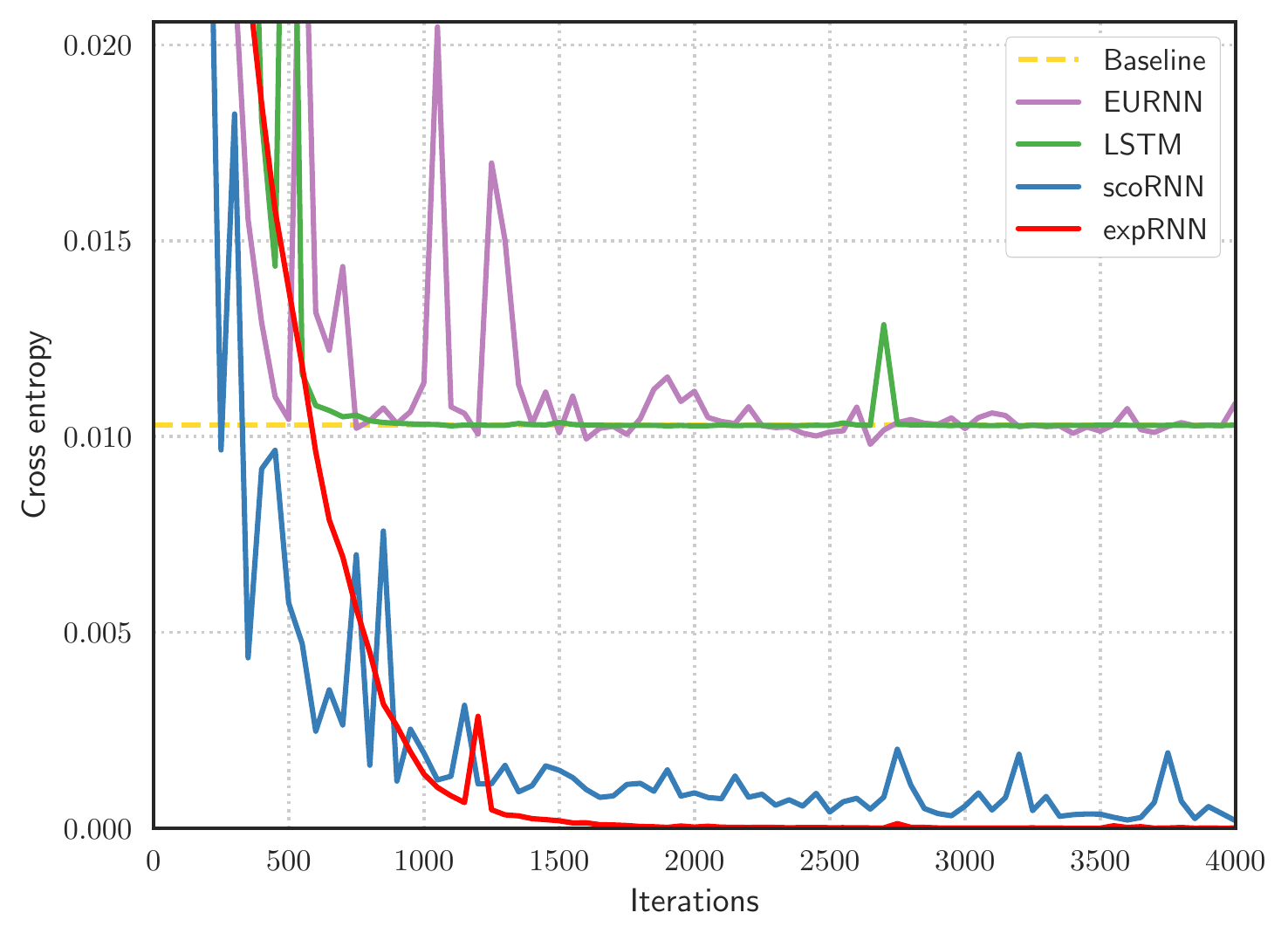}}
  \end{minipage}
  \caption{Cross entropy of the different algorithms in the copying problem for $L=1000$ (left) and $L=2000$ (right).}
  \label{fig:exp1}
\end{center}
\vskip -0.2in
\end{figure*}

We use three tasks that have become standard to measure the performance of \rnn s and their ability to deal with long-term dependencies. These are the copying memory task, the pixel-permuted \mnist{} task, and the speech prediction on the \timit{} dataset~\cite{arjovsky2016unitary,wisdom2016full,henaff2016recurrent,mhammedi2017efficient,helfrich18a}.

In~\Cref{sec:hyperparameters}, we enumerate the hyperparameters used for the experiments. The sizes of the hidden layer were chosen to match the number of learnable parameters of the other architectures.

\begin{remark}
    We found empirically that having a learning rate for the orthogonal parameters that is $10$ times larger than that of the non-orthogonal parameters yields a good performance in practice.
\end{remark}

For the other experiments, we executed the code that the other authors provided with the best hyperparameters that they reported and a batch of $128$. The results for \eurnn{} are those reported in~\cite{jing2017tunable}, and for \rgd{} and \urnn{} are those reported in~\cite{helfrich18a}.

The code with the exact configuration and seeds to replicate these results, and a plug-and-play implementation of \exprnn{} and the exponential framework can be found in
\ifdefined\isgithub
\begin{center}
\url{https://github.com/Lezcano/expRNN}
\end{center}
\else
\url{https://github.com/Lezcano/expRNN}.
\fi

\subsection{Copying memory task}
The copying memory task was first proposed in~\cite{hochreiter1997long}.
The task can be defined as follows. Let $\mathcal{A} = \set{a_k}_{k=1}^N$ be an alphabet and let $\code{<blank>}$, $\code{<start>}$ be two symbols not contained in $\mathcal{A}$. For a sequence length of $K$ and a spacing of length $L$, the input sequence would be $K$ ordered characters $\pa{b_k}_{k=1}^K$ sampled i.i.d.\ uniformly at random from $\mathcal{A}$, followed by $L$ repetitions of the character $\code{<blank>}$, the character $\code{<start>}$ and finally $K-1$ repetitions of the character $\code{<blank>}$ again. The output for this sequence would be $K+L$ times the $\code{<blank>}$ character and then the sequence of characters $\pa{b_k}_{k=1}^K$. In other words, the system has to recall the initial $K$ characters and reproduce them after detecting the input of the character $\code{<start>}$, which appears $L$ time-steps after the end of the input characters. For example, for $N = 4$, $K = 5$, $L = 10$, if we represent $\code{<blank>}$ with a dash and $\code{<start>}$ with a colon, and the alphabet $\mathcal{A} = \set{1, \dots, 4}$, the following sequences could be an element of the dataset:
\begin{align*}
    \text{Input:  }&\texttt{14221----------:----} \\
    \text{Output: }&\texttt{---------------14221}
\end{align*}

The loss function for this task is the cross entropy. The standard baseline for this task is the output of $K+L$ $\code{<blank>}$ symbols, followed by the remaining $K$ symbols being output at random. This strategy yields a cross entropy of $K\log\pa{N}/ \pa{L + 2K}$.

We observe that the training of \scornn{} is unstable, which is probably due to the degeneracies explained in the remark in~\Cref{sec:exprnn}.
In the follow-up paper~\cite{maduranga2018complexur}, \scurnn{} presents the same instabilities as its predecessor.
As explained in~\Cref{sec:exprnn}, \exprnn{} does not suffer of this, and can be observed in our experiments as a smoother convergence.
In the more difficult problem, $L=2000$, \exprnn{} is the only architecture that is able to fully converge to the correct answer.

\begin{figure*}[!tbp]
\vskip 0.2in
\begin{center}
  \begin{minipage}[b]{\columnwidth}
      \centerline{\includegraphics[width=0.83\columnwidth]{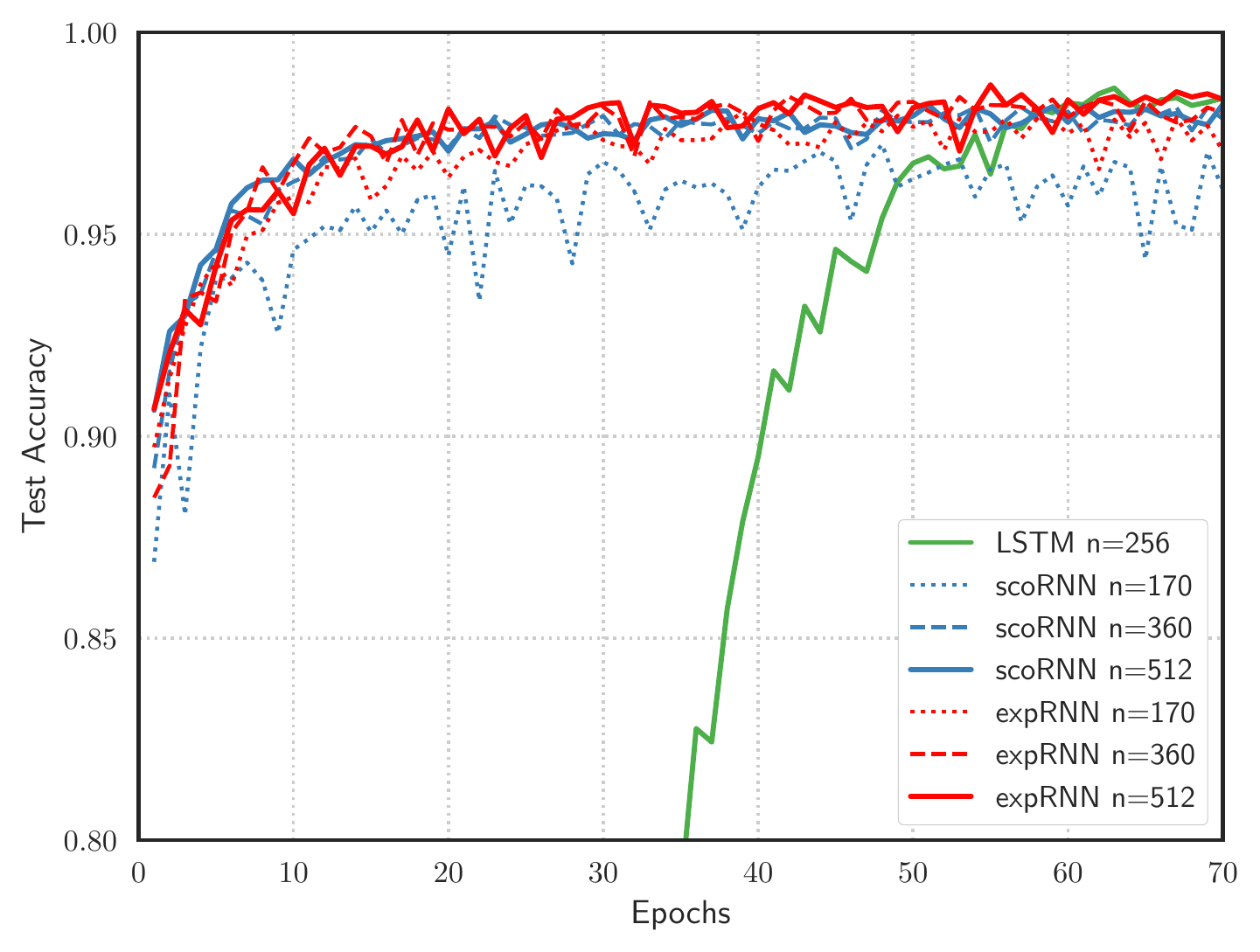}}
  \end{minipage}
  \begin{minipage}[b]{\columnwidth}
      \centerline{\includegraphics[width=0.83\columnwidth]{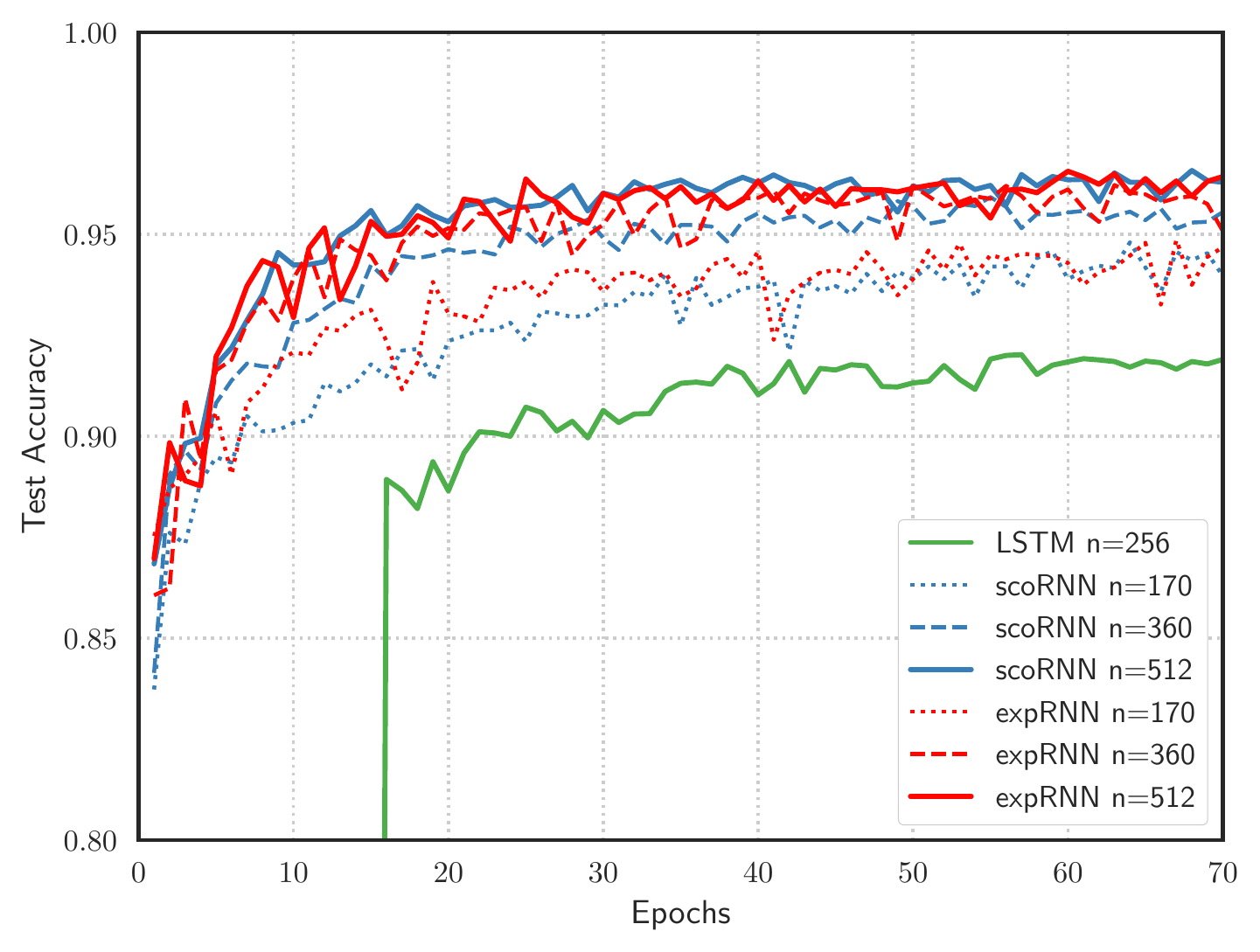}}
  \end{minipage}
  \caption{Test losses for several models on pixel-by-pixel \mnist{} (left) and \pmnist{} (right).}
  \label{fig:mnist}
\end{center}
\vskip -0.2in
\end{figure*}

\subsection{Pixel-by-Pixel \mnist}
In this task we use the \mnist{} dataset of hand-written numbers~\cite{lecun-mnisthandwrittendigit-2010} of images of size $28 \times 28$, only this time the images are flattened and are processed as an array of $784$ pixels, which is treated as a stream that is fed to the \rnn, as described in~\cite{le2015simple}. In the unpermuted task, the stream is processed in a row-by-row fashion, while in the permuted task, a random permutation of the $784$ elements is chosen at the beginning of the experiment, and all the pixels of all the images in the experiment are permuted according to this permutation. The final output of the \rnn{} is processed as the encoding of the number and used to solve the corresponding classification task.

In this experiment we observed that \exprnn{} is able to saturate the capacity of the orthogonal \rnn{} model for this task much faster than any other parametrization, as per~\Cref{tab:mnist}.
We conjecture that coupling the exponential parametrization with an \lstm{} cell or a \gru{} cell would yield a superior architecture. We leave this for future research.

\begin{table}[t]
    \caption{Best test accuracy at the \mnist{} and \pmnist{} tasks.}
\label{tab:mnist}
\vskip 0.15in
\begin{center}
\begin{small}
\begin{sc}
\begin{tabularx}{\columnwidth}{l c X X X}
    \toprule
    Model & n & \# params & \mnist & \pmnist \\
    \midrule
    \midrule
    \exprnn & $170$ & $\approx 16K$ & $0.980$ & $0.949$ \\
    \exprnn & $360$ & $\approx 69K$ & $0.984$ & $0.962$ \\
    \exprnn & $512$ & $\approx 137K$ & $\mathbf{0.987}$ & $\mathbf{0.966}$ \\
    \midrule
    \scornn & $170$ & $\approx 16K$ & $0.972$ & $0.948$ \\
    \scornn & $360$ & $\approx 69K$ & $0.981$ & $0.959$ \\
    \scornn & $512$ & $\approx 137K$ & $0.982$ & $0.965$ \\
    \midrule
    \lstm & $128$ & $\approx 68K$ & $0.819$ & $0.795$ \\
    \lstm & $256$ & $\approx 270K$ & $0.888$ & $0.888$ \\
    \lstm & $512$ & $\approx 1058K$ & $0.919$ & $0.918$ \\
    \midrule
    \rgd & $116$ & $\approx 9K$ & $0.947$ & $0.925$ \\
    \rgd & $512$ & $\approx 137K$ & $0.973$ & $0.947$ \\
    \midrule
    \urnn & $512$ & $\approx 9K$ & $0.976$ & $0.945$ \\
    \urnn & $2170$ & $\approx 69K$ & $0.984$ & $0.953$ \\
    \midrule
    \eurnn & $512$ & $\approx 9K$ & $-$ & $0.937$ \\
    \bottomrule
\end{tabularx}
\end{sc}
\end{small}
\end{center}
\vskip -0.1in
\end{table}

\subsection{\timit{} Speech Dataset}
We performed speech prediction on audio data with our model. We used the \timit{} speech dataset~\cite{garofolo1992timit} which is a collection of real-world speech recordings. The task accounts for predicting the log-magnitude of incoming frames of a short-time Fourier transform (\stft) as it was first proposed in~\cite{wisdom2016full}.

We use the separation in train / test proposed in the original paper, having $3640$ utterances for the training set, a validation set of size $192$, and a test set of size $400$. The validation / test division and the whole preprocessing of the dataset was done according to~\cite{wisdom2016full}. The preprocessing goes as follows: The data is sampled at $8$kHz and then cut into time frames of the same size. These frames are then transformed into the log-magnitude Fourier space and finally, they are normalized according to a per-training set, test set, and validation set basis.

The results for this experiment are shown in~\Cref{tab:timit}. Again, the exponential parametrization beats---by a large margin---other methods of parametrization over the orthogonal group, and also the \lstm{} architecture. The results in~\Cref{tab:timit} are those reported in~\cite{helfrich18a}.

\begin{table}[!ht]
    \caption{Test \mse{} at the end of the epoch with the lowest validation \mse{} for the \timit{} task.}
\label{tab:timit}
\vskip 0.15in
\begin{center}
\begin{small}
\begin{sc}
\begin{tabularx}{\columnwidth}{l c X X X }
    \toprule
    Model & n & \# params & Val. \mse{} & Test \mse{}\\
    \midrule
    \midrule
    \exprnn & $224$ & $\approx 83K$  & $5.34$ & $5.30$ \\ % 1e-4 orth adam
    \exprnn & $322$ & $\approx 135K$ & $\mathbf{4.42}$ & $\mathbf{4.38}$ \\   % 7e-5 orth adam
    \exprnn & $425$ & $\approx 200K$ & $5.52$ & $5.48$ \\ % 7e-5 orth adam
    \midrule
    \scornn & $224$ & $\approx 83K$  & $9.26$ & $8.50$ \\
    \scornn & $322$ & $\approx 135K$ & $8.48$ & $7.82$ \\
    \scornn & $425$ & $\approx 200K$ & $7.97$ & $7.36$ \\
    \midrule
    \lstm & $84$ &  $\approx 83K$  & $15.42$ & $14.30$ \\
    \lstm & $120$ & $\approx 135K$ & $13.93$ & $12.95$ \\
    \lstm & $158$ & $\approx 200K$ & $13.66$ & $12.62$ \\
    \midrule
    \eurnn & $158$ & $\approx 83K$  & $15.57$ & $18.51$ \\
    \eurnn & $256$ & $\approx 135K$ & $15.90$ & $15.31$ \\
    \eurnn & $378$ & $\approx 200K$ & $16.00$ & $15.15$ \\
    \midrule
    \rgd & $128$ &  $\approx 83K$  & $15.07$ & $14.58$ \\
    \rgd & $192$ & $\approx 135K$ & $15.10$ & $14.50$ \\
    \rgd & $256$ & $\approx 200K$ & $14.96$ & $14.69$ \\
    \bottomrule
\end{tabularx}
\end{sc}
\end{small}
\end{center}
\vskip -0.1in
\end{table}

As a side note, we must say that the results in this experiment should be interpreted under the following fact: We had access to two of the implementations for the tests for the other architectures regarding this experiment, and neither of them correctly handled sequences with different lengths present in this experiment. We suspect that the other implementations followed a similar approach, given that the results that they get are of the same order. In particular, the implementation released by Wisdom, which is the only publicly available implementation of this experiment, divides by a larger number than it should when computing the average \mse{} of a batch, hence reporting a lower \mse{} than the correct one. Even in this unfavorable scenario, our parametrization is able to get results that are twice as good---the \mse{} loss function is a quadratic function---as those from the other architectures.

%! TEX root = main.tex
\section{Conclusion and Future Work}
In this paper we have presented three main ideas. First, a simple approach based on classic Lie group theory to perform optimization over compact Lie groups, in particular $\SO{n}$ and $\U{n}$, proving its soundness and providing empirical evidence of its superior performance. Second, an implementation trick that allows for the implementation of arbitrary parametrizations at a negligible runtime cost. Finally, we sketched how to improve some existing methods to perform optimization on Lie groups using Riemannian gradient descent.
Any of these three ideas is of independent interest and could have more applications within neural networks.

The investigation of how to couple these ideas with the \lstm{} architecture to improve its performance is left for future work.

Additionally, it could be of interest to see how orthogonal constraints help with learning in deep feed forward networks. In order to make this last point formal, one would have to generalize the results presented here to \emph{homogeneous Riemannian manifolds}, like the Stiefel manifold.

\ifdefined\isarxiv
\else
    \clearpage
\fi

\section*{Acknowledgements}
We would like to thank the help of Prof.~Raphael Hauser and Jaime Mendizabal for the useful conversations, Daniel Feinstein for the proofreading, Prof.~Terry Lyons for the computing power, and Kyle Helfrich for helping us setting up the experiments.

The work of MLC was supported by the Oxford-James Martin Graduate Scholarship and the ``la Caixa'' Banking Foundation (LCF/BQ/EU17/11590067). The work of DMR was supported by EP/N509711/1 from the EPSRC MPLS division, grant No 2053152.

\bibliographystyle{icml2019}
\bibliography{refs}
\clearpage
\onecolumn
\appendix
%! TEX root = main.tex
\section{Riemannian Geometry and Lie Groups}\label{sec:appendix_background}
    In this section we aim to give a short summary of the basics of classical Riemannian geometry and Lie group theory needed for our proofs in following sections. The standard reference for classical Riemannian geometry is do Carmo's book~\cite{do1992riemannian}. An elementary introduction to Lie group theory with an emphasis on concrete examples from matrix theory can be found in~\cite{hall2015lie}.

    \subsection{Riemannian geometry}
    A Riemannian manifold is an $n$-dimensional smooth manifold $\Mani$ equipped with a smooth metric $\scalar{\cdot, \cdot}_p \colon T_p\Mani \times T_p\Mani \to \R$ which is a positive definite inner product for every $p \in \Mani$. We will omit the dependency on the point $p$ whenever it is clear from the context. Given a metric, we can define the length of a curve $\gamma$ on the manifold as $L(\gamma) := \int_a^b \sqrt{\scalar{\gamma'(t), \gamma'(t)}}\,\dif t$. The distance between two points is the infimum of the lengths of the piece-wise smooth curves on $\Mani$ connecting them. When the manifold is connected, this defines a distance function that turns the manifold into a metric space.

    An \emph{affine connection} $\conn$ on a smooth manifold is a bilinear application that maps two vector fields $X,Y$ to a new one $\conn_X Y$ such that it is linear in $X$, and linear and Leibnitz in $Y$.

    Connections give a notion of variation of a vector field along another vector field. In particular, the covariant derivative is the restriction of the connection to a curve. In particular, we can define the notion of parallel transport of vectors. We say that a vector field $Z$ is \emph{parallel} along a curve $\gamma$ if $\conn_{\gamma'} Z = 0$ where $\gamma' \defi \dif \gamma\pa{\deriv{t}}$. Given initial conditions $(p, v) \in T\Mani$ there exists locally a unique parallel vector field $Z$ along $\gamma$ such that $Z(p) = v$. $Z(\gamma(t))$ is sometimes referred to as the \emph{parallel transport of $v$ along $\gamma$}.

    We say that a connection is \emph{compatible with the metric} if for any two parallel vector fields $X, Y$ along $\gamma$, their scalar product is constant. In other words, the connection preserves the angle between parallel vector fields. We say that a connection is \emph{torsion-free} if $\conn_X Y - \conn_Y X = [X,Y] \defi XY - YX$. In a Riemannian manifold, there exists a unique affine connection such that it is compatible with the metric and that is also torsion-free. We call this distinguished connection the \emph{Levi-Civita connection}.

    A geodesic is defined as a curve such that its tangent vectors are covariantly constant along itself, $\conn_{\gamma'}\gamma' = 0$. It is not true in general that given two points in a manifold there exists a geodesic that connects them. However, the \emph{Hopf-Rinow theorem} states that this is indeed the case if the manifold is connected and complete as a metric space. The manifolds that we will consider are all connected and complete.

    At every point $p$ in our manifold we can define the \emph{Riemannian exponential map} $\deffun{\exp_p :T_p\Mani -> \Mani;}$, which maps a vector $v$ to $\gamma(1)$ where $\gamma$ is the geodesic such that $\gamma(0) = p$, $\gamma'(0) = v$. In a complete manifold, another formulation of the \emph{Hopf-Rinow theorem} says that the exponential map is defined on the whole tangent space for every $p \in \Mani$. We also have that the Riemannian exponential map maps diffeomorphically a neighborhood around zero on the tangent space to a neighborhood of the point on which it is defined.

    A map between two Riemannian manifolds is called a (local) isometry if it is a (local) diffeomorphism and its differential respects the metric.

    \subsection{Lie groups}
     A Lie group is a smooth manifold equipped with smooth group multiplication and inverse. Examples of Lie groups are the Euclidean space equipped with its additive group structure and the general linear group $\GLaux$ of a finite dimensional vector space given by the invertible linear endomorphisms of the space equipped with the composition of morphisms. We say that a Lie group is a matrix Lie group if it is a closed subgroup of some finite-dimensional general linear group.

     Lie groups act on themselves via the left translations given by $L_g(x) = gx$ for $g, x \in G$. A vector field $X$ is called \emph{left invariant} if $\pa{\dif L_g}\pa{X} = X \circ L_g$. A left invariant vector field is uniquely determined by its value at the identity of the group. This identification gives us a way to identify the tangent space at a point of the group with the tangent space at the identity. We call the tangent space at the identity \emph{the Lie algebra of $G$} and we denote it by $\g$.

    For every vector $v \in \g$ there exists a unique curve $\deffun{\gamma : \R  -> G;}$ such that $\gamma$ is the integral curve of the left-invariant vector field defined by $v$ such that $\gamma(0) = e$. This curve is a Lie group homomorphism and we call it the \emph{Lie exponential}. It is also the integral curve of the right-invariant vector field with initial vector $v$.

    We say that $c_g(x) = gxg^{-1}$ for $g,x \in G$ is an \emph{inner automorphism of $G$}. Its differential at the identity is the \emph{adjoint representation of $G$}, $\deffun{\Ad : G -> \GL{\g};}$ defined as $\Ad_g(X) \defi \pa{\dif c_g}_e(X)$ for $g \in G$, $X \in \g$. The differential at the identity of $\Ad$ is called the \emph{adjoint representation of $\g$}, $\deffun{\ad : \g -> \End\pa{\g};}$ defined as $\ad_X(Y) \defi \pa{\dif \Ad}_e(X)(Y)$. We say that $\ad_X(Y)$ is the \emph{Lie bracket of $\g$} and we denote it by $[X,Y]$. For a matrix Lie group we have $\Ad_g(X) = gXg^{-1}$ and $\ad_X(Y) = XY - YX$.

    A (complex) representation of a group is a continuous group homomorphism $\deffun{\rho : G -> \GL{n, \C};}$. An injective representation is called \emph{faithful}. The inclusion is a faithful representation for any matrix Lie group. On a compact Lie group, $\rho(g)$ is diagonalizable for every $g \in G$.

    A Riemannian metric on $G$ is said to be bi-invariant if it turns left and right translations into isometries. We have that every compact Lie group admits a bi-invariant metric. An example of a bi-invariant metric on the group of orthogonal matrices with positive determinant $\SO{n}$ is that inherited from $\M{n}$, namely the canonical metric $\scalar{X,Y} = \tr\pa{\trans{X}Y}$. The same happens in the unitary case, but changing the transpose for a conjugate transpose $\scalar{X,Y} = \tr\pa{\adj{X}Y}$. Furthermore, every Lie group that admits a bi-invariant metric is a homogeneous Riemannian manifold---there exists an isometry between that takes any point to any other point---, and hence, complete.

\section{Retractions}\label{sec:retractions}
We take a deeper look into the concept of a retraction, which helps understanding the correctness of the approach used to optimize on $\SO{n}$ and $\U{n}$ presented in~\cite{wisdom2016full,vorontsov2017orthogonality}.

The concept of a retraction is a relaxation of that of the Riemannian exponential map.
\begin{definition}[Retraction]
    A retraction is a map
    \[
        \deffun{r : T\Mani -> \Mani ; (x,v) -> r_x(v)}
    \]
    such that
    \[
        r_x(0) = x \qquad \text{and} \qquad (\dif r_x)_0 = \Id
    \]
    where $\Id$ is the identity map.
\end{definition}
In other words, when $\Mani$ is a Riemannian manifold, $r$ is a first order approximation of the Riemannian exponential map.

It is clear that the exponential map is a retraction. For manifolds embedded in the Euclidean space with the metric induced by that of the Euclidean space, the following proposition gives us a simple way to construct a rather useful family of retractions---those used in projected gradient descent.
\begin{proposition}\label{prop:proj}
    Let $\Mani$ be an embedded submanifold of $\R^n$, then for a differentiable surjective projection $\deffun{\pi : \R^n -> \Mani ;}$, that is, $\pi \circ \pi = \pi$, the map
\[
    \deffun{r : T\Mani -> \Mani ; (x,v) -> \pi(x+v)}
\]
is a retraction, where we are implicitly identifying $T_x \Mani \subset T_x\R^n \iso \R^n$.
\end{proposition}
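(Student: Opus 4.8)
The plan is to verify the two defining properties of a retraction directly, the main preliminary observation being that a surjective idempotent $\pi$ must fix its image $\Mani$ pointwise.

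First I would establish that $\pi|_{\Mani} = \Id_{\Mani}$. Given $y \in \Mani$, surjectivity of $\pi$ provides some $z \in \R^n$ with $\pi(z) = y$, and then the idempotency $\pi \circ \pi = \pi$ yields $\pi(y) = \pi(\pi(z)) = \pi(z) = y$. With this in hand the first retraction condition is immediate: for any $x \in \Mani$,
\[
    r_x(0) = \pi(x + 0) = \pi(x) = x.
\]

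The second condition, $(\dif r_x)_0 = \Id$, is where the real content lies. Since $v \mapsto x + v$ has differential equal to the identity, the chain rule gives $(\dif r_x)_0 = \dif\pi_x$ restricted to $T_x\Mani$, so I must show that $\dif\pi_x$ acts as the identity on the subspace $T_x\Mani \subset T_x\R^n \iso \R^n$. To see this, I would take any $w \in T_x\Mani$ and a smooth curve $\gamma$ lying \emph{in} $\Mani$ with $\gamma(0) = x$ and $\gamma'(0) = w$; because $\pi$ fixes $\Mani$ pointwise, $\pi(\gamma(t)) = \gamma(t)$ for all $t$, and differentiating at $t = 0$ gives $\dif\pi_x(w) = w$. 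Hence $\dif\pi_x|_{T_x\Mani} = \Id$, which together with the chain rule identification yields $(\dif r_x)_0 = \Id$ and completes the proof.

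The step I expect to require the most care is not a deep obstacle but rather the bookkeeping of the identifications $T_x\Mani \subset T_x\R^n \iso \R^n$ and the verification that $\dif r_x$ indeed maps into $T_{r_x(0)}\Mani = T_x\Mani$; the image of $\dif\pi_x$ lands in $T_{\pi(x)}\Mani$ precisely because $\pi$ takes values in $\Mani$, so the domains and codomains line up. As a sanity check one may also differentiate $\pi \circ \pi = \pi$ at $x$ to obtain $\dif\pi_x \circ \dif\pi_x = \dif\pi_x$, confirming that $\dif\pi_x$ is a linear projection whose restriction to its image $T_x\Mani$ is necessarily the identity, which is consistent with the curve computation above.
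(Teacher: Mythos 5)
Your proof is correct and follows essentially the same route as the paper's: derive $\pi\vert_{\Mani} = \Id$ from surjectivity and idempotency, get the first condition immediately, and then conclude $(\dif\pi)_x = \Id\vert_{T_x\Mani}$ combined with the chain rule for $v \mapsto x+v$. You merely spell out more explicitly (via the curve argument) the step the paper states in one line, which is a reasonable elaboration rather than a different approach.
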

\begin{proof}
    From $\pi$ being a surjective projection we have that $\pi(x) = x$ for every $x \in \Mani$, which implies the first condition of the definition of retraction.

    Another way of seeing the above is saying that $\pi \vert_{\Mani} = \Id$. This implies that, for every $x \in \Mani$, $(\dif \pi)_x = \Id \vert_{T_x\Mani}$. By the chain rule, since the differential of $v \mapsto x+v$ is the identity as well we get the second condition.
\end{proof}

This proposition lets us see projected Riemannian gradient descent as an specific case of Riemannian gradient descent with a specific retraction. A corollary of this proposition that allows $r$ to be defined just in those vectors of the form $x + v$ with $(x,v) = T\Mani$ lets us construct specific examples of retractions:

\begin{example}[Sphere]
    The function
    \[
        r_x(v) = \frac{x+v}{\norm{x+v}}
    \]
    for $v \in T_x\S^n$ is a retraction.
\end{example}

\begin{example}[Special orthogonal group]
    Recall that for $A \in \SO{n}$,
    \[
        T_A(\SO{n}) = \set{X \in \R^{n \times n} | \trans{A}X + \trans{X}A = 0},
    \]
    then, for an element of $T\SO{n}$ we can define the map given by~\Cref{prop:proj}. In this case,  the projection $\pi(X)$ for a matrix with singular value decomposition $X = U \Sigma \trans{V}$ is $\pi(X) = U\trans{V}$.

    This projection is nothing but the orthogonal projection from $\R^{n \times n}$ onto $\SO{n}$ when equipped with the canonical metric.
\end{example}

The two examples above are examples of orthogonal projections. The manifolds being considered are embedded into a Euclidean space and they inherit its metric. The projections here are orthogonal projections on the ambient space. On the other hand, \Cref{prop:proj} does not require the projections to be orthogonal.

Different examples of retractions can be found in~\cite{absil2009optimization}, Example $4.1.2$.

\section{Comparing Riemannian gradient descent and the exponential parametrization}\label{sec:appendix_comparison}
The proofs in this section are rather technical and general so that they apply to a wide variety of manifolds such as $\SO{n}$, $\U{n}$, or the symplectic group. Even though we do not explore applications of optimizing over other compact matrix Lie groups, we state the results in full generality. Having this in mind, we will first motivate this section with the concrete example that we study in the applications of this paper: $\SO{n}$.

\begin{example}\label{ex:riemannian_gradient_descent_son}
Let $\deffun{f : \M{n} -> \R;}$ be a function defined on the space of matrices. Our task is to solve the problem
\[
    \min_{B \in \SO{n}} f(B).
\]

A simple way to approach this problem would be to apply Riemannian gradient descent to it. Let $A$ be a skew-symmetric matrix an let $B = e^A \in \SO{n}$, where $e^A$ denotes the exponential of matrices. We will suppose that we are working with the canonical metric on $\M{n}$, namely $\scalar{X, Y} = \tr\pa{\trans{X}Y}$. We will denote by $\grad f(B)$ the gradient of the function $f$ at the matrix $B$, and by $\gradmani f(B) \in T_B\SO{n}$ the gradient associated to the restriction $f \vert_{\SO{n}}$ with respect to the induced metric.

Riemannian gradient descent works by following the geodesic defined by the direction $- \gradmani f(B)$ at the point $B$. In the words of the Riemannian exponential map at $B$, if we have a learning rate $\eta > 0$, the update rule will be given by
\[
    B \gets \exp_B\pa{-\eta\gradmani f(B)}.
\]

The tangent space to $\SO{n}$ at a matrix $B$ is
\[
    T_B\SO{n} = \set{X \in \M{n} | \trans{B}X + \trans{X}B = 0}
\]
and it is easy to check that the orthogonal projection with respect to the canonical metric onto this space is given by
\[
    \deffun{\pi_B : \M{n} -> T_B\SO{n}; X -> \tfrac{1}{2}\pa{X - B\trans{X}B}}
\]
Since the gradient on the manifold is just the tangent component of the gradient on the ambient space, we have that
\[
    \gradmani f(B) = \tfrac{1}{2}\pa{\grad f(B) - B\trans{\grad f(B)}B},
\]
and since multiplying by an orthogonal matrix constitutes an isometry, in order to compute the Riemannian exponential map we can transport the vector from $T_BG$ to $T_{\I}G$, compute the exponential at the identity using the exponential of matrices and then transport the result back. In other words,
\[
    \exp_B(X) = B\exp\pa{\trans{B}X} \qquad \forall X \in T_B\SO{n}.
\]

Putting everything together, the Riemannian gradient descent update rule for $\SO{n}$ is given by
\[
    B \gets e^A e^{-\eta \trans{B} \gradmani f(B)}.
\]

The other update rule that we have is the one given by the exponential parametrization
\[
    B \gets e^{A - \eta \grad\pa{f \circ \exp}(A)}.
\]
This is nothing but the gradient descent update rule applied to the problem
\[
    \min_{A \in \Skew{n}} f\pa{\exp\pa{A}}.
\]

Both of these rules follow geodesic flows for two metrics whenever $A$ is in a neighborhood of the identity on which $\exp$ is a diffeomorphism (\cf \Cref{sec:appendix_fundamental_domain}). A natural question that arises is whether these metrics are the same. A less restrictive question would be whether this second optimization procedure defines a retraction or whether their gradient flow is completely different.

In the sequel, we will see that for $\SO{n}$ these two optimization methods give raise to two rather different metrics. We will explicitly compute the quantity $\grad\pa{f \circ \exp}(A)$, and we will give necessary and sufficient conditions equivalent under which these two optimization methods agree.
\end{example}

\subsection{Optimization on Lie Groups with Bi-invariant Metrics}
In this section we expose the theoretical part of the paper. The first part of this section is
classic and can be found, for example, in Milnor~\cite{milnor1976curvatures}. We present it here for
completeness.
The results~\Cref{prop:gradient} and~\Cref{thm:retract} are novel.

\begin{remark}
    Throughout this section the operator $\pa{-}^\ast$ will have two different meanings. It can be either the pullback of a form along a function or the adjoint of a linear operator on a vector space with an inner product. Although the two can be distinguished in many situations, we will explicitly mention to which one we are referring whenever it may not be clear from the context. Note that when we are on a matrix Lie group equipped with the product $\scalar{X,Y} = \tr\pa{\adj{X}Y}$, the adjoint of a linear operator is exactly its conjugate transpose, hence the notation.

    When we deal with an abstract group we will denote the identity element as $e$. If the group is a matrix Lie group, we will sometimes refer to it as $\I$.
\end{remark}

We start by recalling the definition of our object of study.
\begin{definition}[Invariant metric on a Lie Group]
    A Riemannian metric on a Lie group $G$ is said to be left (resp.\ right) invariant if it makes left (resp.\ right) translations into isometries. Explicitly, it is so if for every $g \in G$, and the metric $\alpha$ we have that $\alpha_g = L^\ast_{g^{-1}} \alpha_e$ (resp. $\alpha_g = R^\ast_{g^{-1}}\alpha_e$).

    A bi-invariant metric is a metric that is both left and right-invariant.
\end{definition}

We can construct a bi-invariant metric on a Lie group by using the \emph{averaging trick}.
\begin{proposition}[Bi-invariant metric on compact Lie groups]
    A compact Lie group $G$ admits a bi-invariant metric.
\end{proposition}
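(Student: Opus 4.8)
The plan is to use the \emph{averaging trick} alluded to above. The first observation is that left-invariant metrics on $G$ are in bijection with inner products on the Lie algebra $\g = T_e G$: given an inner product $\scalar{\cdot,\cdot}$ on $\g$, one extends it to a left-invariant metric by setting $\alpha_g \defi L^\ast_{g^{-1}}\scalar{\cdot,\cdot}$, and conversely evaluating at $e$ recovers the inner product. Under this correspondence, I claim that the resulting left-invariant metric is in addition right-invariant precisely when the inner product on $\g$ is $\Ad$-invariant, i.e.\ $\scalar{\Ad_g X, \Ad_g Y} = \scalar{X, Y}$ for all $g \in G$ and $X, Y \in \g$. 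Thus the statement reduces to producing an $\Ad$-invariant inner product on $\g$.

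To build such an inner product, I would start from \emph{any} inner product $\scalar{\cdot,\cdot}_0$ on the finite-dimensional vector space $\g$ (which exists trivially) and average it over the group against the Haar measure. Concretely, since $G$ is compact it carries a bi-invariant Haar measure $\mu$ of finite total mass, which I normalize so that $\mu(G) = 1$, and I define
\[
    \scalar{X, Y} \defi \int_G \scalar{\Ad_g X, \Ad_g Y}_0 \, \dif\mu(g).
\]
This is a well-defined, symmetric, bilinear form, and it is positive definite because it is an integral of the positive-definite forms $\scalar{\Ad_g \cdot, \Ad_g \cdot}_0$ against a positive measure of full support; hence $\scalar{\cdot,\cdot}$ is again an inner product on $\g$. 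Its $\Ad$-invariance follows from the invariance of $\mu$: for $h \in G$, using that $\Ad$ is a homomorphism so that $\Ad_g \Ad_h = \Ad_{gh}$,
\[
    \scalar{\Ad_h X, \Ad_h Y} = \int_G \scalar{\Ad_{gh} X, \Ad_{gh} Y}_0 \, \dif\mu(g) = \int_G \scalar{\Ad_g X, \Ad_g Y}_0 \, \dif\mu(g) = \scalar{X, Y},
\]
where the middle equality is the substitution $g \mapsto g h^{-1}$, valid by the right-invariance of $\mu$.

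It remains to discharge the claim from the first paragraph, namely that left-invariance plus $\Ad$-invariance of the inner product yields a bi-invariant metric. Left-invariance of $\alpha$ holds by construction, so it suffices to show each right translation $R_g$ is an isometry. Writing $c_{g^{-1}} = L_{g^{-1}} \circ R_g$, hence $R_g = L_g \circ c_{g^{-1}}$, and using that the $L_h$ are already isometries, this reduces to checking that the inner automorphism $c_{g^{-1}}$ is an isometry of $\alpha$. Since $c_{g^{-1}}$ is a group automorphism it carries left-invariant vector fields to left-invariant vector fields, so $c_{g^{-1}}^\ast \alpha$ is again left-invariant; as its value at $e$ is $\scalar{\Ad_{g^{-1}}\cdot, \Ad_{g^{-1}}\cdot} = \scalar{\cdot,\cdot}$ by $\Ad$-invariance, it agrees with $\alpha$ at $e$ and therefore everywhere.

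The main analytic input — and the only genuinely non-formal ingredient — is the existence of a finite bi-invariant Haar measure on a compact Lie group; granting that, everything else is the formal bookkeeping sketched above. This is where compactness is essential: finiteness of $\mu$ makes the averaging integral converge, and bi-invariance (unimodularity) is what powers the change-of-variables step establishing $\Ad$-invariance.
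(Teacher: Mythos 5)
Your proof is correct and follows essentially the same route as the paper: average an arbitrary inner product on $\g$ over the group against the Haar measure to obtain an $\Ad$-invariant inner product, then extend by left translations and use $\Ad$-invariance to upgrade left-invariance to bi-invariance. The only cosmetic differences are that the paper explicitly constructs the right Haar measure from a top-degree form at the identity whereas you invoke its existence, and you spell out a few routine verifications (positive definiteness, the reduction of right-invariance to invariance under inner automorphisms) that the paper compresses into a one-line pullback computation.
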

\begin{proof}
    Let $n$ be the dimension of $G$ and let $\mu_e$ be a non-zero $n$-form at $\g$. This form is unique up to a multiplicative constant. We can then extend it to the whole $G$ by pulling it back along $R_g$ defining $\mu_g \defi R^\ast_{g^{-1}} \mu_e$. This makes it into a right-invariant $n$-form on the manifold, which we call the \emph{right Haar measure}.

    Let $\pa{\cdot, \cdot}$ be an inner product on $\g$. We can turn this inner product into an $\Ad$-invariant inner product on $\g$ by averaging it over the elements of the group using the right Haar measure
    \[
        \scalar{u, v} = \int_G \pa{\Ad_g(u), \Ad_g(v)}\,\mu(\dif g).
    \]
    Note that this integral is well defined since $G$ is compact.
    The $\Ad$-invariance follows from the right-invariance of $\mu$
    \[
        \scalar{\Ad_h\pa{u}, \Ad_h\pa{v}} = \int_G \pa{\Ad_{gh}(u), \Ad_{gh}(v)}\,\mu(\dif g) = \scalar{u, v}.
    \]

    Finally, we can extend this product to the whole group by pulling back the inner product along $L_g$, that is, if we denote the metric by $\alpha$, $\alpha_g = L^\ast_{g^{-1}}\alpha_e$. This automatically makes it into a left-invariant metric. But since it is $\Ad$-invariant at the identity, we have that for every $g, h \in G$
    \[
        R^\ast_{g} \alpha_{hg} = R^\ast_g L^\ast_{g^{-1}h^{-1}} \alpha_e = \Ad^\ast_{g^{-1}} L^\ast_{h^{-1}} \alpha_e = L^\ast_{h^{-1}} \alpha_e = \alpha_h
    \]
    and the metric is also right-invariant, finishing the proof.
\end{proof}

If the group is abelian, the construction above is still valid without the need of the averaging trick, since $\Ad$ is the identity map, so every inner product is automatically $\Ad$-invariant.

It turns out that these examples and their products exhaust all the Lie groups that admit a bi-invariant metric. We include this result for completeness, even though we will not use it.
\begin{theorem}[Classification of groups with bi-invariant metrics]
    A Lie group admits a bi-invariant metric if and only if it is isomorphic to $G \times H$ with $G$ compact and $H$ abelian.
\end{theorem}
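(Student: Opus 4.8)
The plan is to treat the two implications separately, proving the forward (construction) direction by hand and reducing the converse to the structure theory of the Lie algebra. For the \emph{if} direction, suppose $G$ is compact and $H$ is abelian. The compact factor carries a bi-invariant metric by the averaging-trick proposition above. On $H$, left and right translations coincide since $H$ is abelian, so spreading any inner product on its Lie algebra by left translation already yields a bi-invariant metric. Since translations on a product decompose as $L_{(g,h)} = L_g \times L_h$ and $R_{(g,h)} = R_g\times R_h$, the Riemannian product of the two metrics makes both left and right translations on $G\times H$ isometries, giving the desired bi-invariant metric.

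For the \emph{only if} direction, assume $G$ (connected, say) carries a bi-invariant metric, and let $\scalar{\cdot,\cdot}$ be the induced inner product on $\g$. First I would record that bi-invariance is equivalent to $\Ad$-invariance of $\scalar{\cdot,\cdot}$, and that differentiating this along one-parameter subgroups yields the infinitesimal form, namely that each $\ad_X$ is skew-adjoint:
\[
\langle [X,Y],Z\rangle + \langle Y,[X,Z]\rangle = 0.
\]
The key structural consequence is that the orthogonal complement of any ideal is again an ideal: if $\mathfrak a$ is an ideal and $Y\perp\mathfrak a$, then for every $Z\in\mathfrak a$ skew-adjointness gives $\langle [X,Y],Z\rangle = -\langle Y,[X,Z]\rangle = 0$, so $[X,Y]\perp\mathfrak a$. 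Applying this to the center $\mathfrak z$, which is an ideal, splits $\g = \mathfrak z \oplus \g_s$ orthogonally into ideals, where $\g_s := \mathfrak z^\perp$ has trivial center (a central element of $\g_s$ would be central in $\g$, hence in $\mathfrak z\cap\g_s = 0$) and still carries an $\Ad$-invariant inner product.

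The heart of the argument is to show that $\g_s$ is of \emph{compact type}. Iterating the complemented-ideal splitting decomposes $\g_s$ into mutually orthogonal simple ideals, and on each the $\Ad$-invariant inner product forces the Killing form to be negative definite. By Myers' theorem the corresponding simply connected group $\tilde G_s$ then has Ricci curvature bounded below by a positive constant and is therefore compact (equivalently, this is Weyl's theorem that a simply connected semisimple group of compact type is compact), while $\mathfrak z$ integrates to an abelian group. Finally I would assemble the global product: since $\g = \mathfrak z\oplus\g_s$ is a direct sum of ideals, the two subgroups commute and $G$ is a quotient of $\tilde G_s \times \R^{\dim\mathfrak z}$ by a discrete central subgroup; using that the center of the compact semisimple factor $\tilde G_s$ is finite, one rearranges this quotient into the product of a compact group (the semisimple factor together with the toral directions of the center) and a vector group $\R^m$, which is abelian.

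I expect the main obstacle to be exactly this last passage from the Lie algebra to the group: proving compactness of the semisimple part via negative-definiteness of the Killing form and Myers' theorem, and then controlling the discrete central subgroup carefully enough to obtain an honest isomorphism $G \cong K\times H$ rather than merely a local one. By contrast, the linear-algebra steps---skew-adjointness of $\ad$ and the complemented-ideal lemma---are routine once the infinitesimal invariance condition is in hand.
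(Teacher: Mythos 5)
Your argument is correct in outline, but note that the paper does not actually prove this theorem: its ``proof'' is a bare citation to Lemma 7.5 of Milnor's \emph{Curvatures of left invariant metrics on Lie groups}, and the chain of reasoning you reconstruct --- $\Ad$-invariance $\Rightarrow$ skew-adjointness of $\ad$ $\Rightarrow$ orthogonal complements of ideals are ideals $\Rightarrow$ $\g = \mathfrak{z} \oplus \g_s$ with $\g_s$ a sum of simple ideals on which the Killing form is negative definite $\Rightarrow$ compactness of $\tilde G_s$ via Myers/Weyl $\Rightarrow$ global splitting --- is precisely the argument of that reference. So you have supplied the proof the paper outsources. Two points deserve a line each if this were written out in full. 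First, the negative definiteness of the Killing form on a centerless summand is a one-line computation worth displaying: skew-adjointness gives $B(X,X) = \tr(\ad_X^2) = -\tr(\ad_X \ad_X^{\ast}) \leq 0$, with equality only if $\ad_X = 0$, i.e.\ only if $X$ is central. Second, the final passage you correctly flag as delicate: one must show that the projection of the discrete central subgroup $\Gamma \subset Z(\tilde G_s) \times \R^n$ to $\R^n$ is still discrete (this uses finiteness of $Z(\tilde G_s)$: if images accumulate, infinitely many elements of $\Gamma$ share the same $Z(\tilde G_s)$-component and their quotients violate discreteness), after which $\R^n$ splits as $V \oplus V^{\perp}$ with $V$ the span of the resulting lattice, the factor $(\tilde G_s \times V)/\Gamma$ is compact, and $V^{\perp} \cong \R^m$ splits off. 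Finally, your parenthetical ``(connected, say)'' is not optional: the statement as printed in the paper omits connectedness, and the ``only if'' direction genuinely fails without it (any infinite discrete non-abelian group that is not a direct product of a finite group and an abelian one, such as the infinite dihedral group, carries a trivial bi-invariant metric), so your restriction is the correct reading of the theorem.
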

\begin{proof}
    \cite{milnor1976curvatures} Lemma 7.5.
\end{proof}

Lie groups, when equipped with a bi-invariant metric are rather amenable from the Riemannian geometry perspective. This is because it is possible to reduce many computations on them to matrix algebra, rather than the usual systems of differential equations that one encounters when dealing with arbitrary Riemannian manifolds.

The following proposition will come in handy later.
\begin{lemma}~\label{lemma:ad_unitary}
    If an inner product on $\g$ is $\Ad$-invariant then
    \[
        \scalar{Y, \ad_X(Z)} = -\scalar{\ad_{X}(Y), Z} \qquad \forall X, Y, Z \in \g.
    \]
    In other words, the adjoint of the map $\ad_X$ with respect to the inner product is $-\ad_X$. We say that $\ad$ is skew-adjoint and we write $\ad^\ast_X = -\ad_X$.
\end{lemma}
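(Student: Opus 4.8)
The plan is to differentiate the $\Ad$-invariance of the inner product along a one-parameter subgroup and read off the skew-adjointness of $\ad_X$ as the derivative at the identity. The starting point is the defining property of $\Ad$-invariance: for every $g \in G$ and all $Y, Z \in \g$, we have $\scalar{\Ad_g(Y), \Ad_g(Z)} = \scalar{Y, Z}$. The idea is to specialize $g = \exp(tX)$ for a fixed $X \in \g$, so that the left-hand side becomes a smooth function of the real parameter $t$ which is constant (equal to $\scalar{Y,Z}$), and therefore has vanishing derivative at $t = 0$.

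First I would write $g(t) = \exp(tX)$ and consider the function $F(t) = \scalar{\Ad_{g(t)}(Y), \Ad_{g(t)}(Z)}$. By $\Ad$-invariance, $F(t) = \scalar{Y,Z}$ for all $t$, so $F'(0) = 0$. Next I would compute $F'(0)$ using bilinearity of the inner product and the product rule, which gives
\[
    F'(0) = \scalar*{\deriv{t}\Big|_{t=0}\Ad_{g(t)}(Y),\, Z} + \scalar*{Y,\, \deriv{t}\Big|_{t=0}\Ad_{g(t)}(Z)}.
\]
The key computational fact I would invoke here is the standard relation between the adjoint representations of the group and the algebra: by the very definition of $\ad$ as the differential of $\Ad$ at the identity (recalled in~\Cref{sec:appendix_background}), we have $\deriv{t}\big|_{t=0}\Ad_{\exp(tX)}(W) = \ad_X(W)$ for any $W \in \g$. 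Substituting this for both $W = Y$ and $W = Z$ yields
\[
    0 = F'(0) = \scalar{\ad_X(Y),\, Z} + \scalar{Y,\, \ad_X(Z)},
\]
which is exactly the claimed identity after rearranging.

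The main obstacle, such as it is, lies not in the algebra but in cleanly justifying the differentiation of $t \mapsto \Ad_{\exp(tX)}$ at $t = 0$ and identifying its derivative with $\ad_X$. This is purely a matter of appealing to the definition $\ad_X(Y) \defi \pa{\dif \Ad}_e(X)(Y)$ together with the fact that $\deriv{t}\big|_{t=0}\exp(tX) = X$, so that composing $\Ad$ with the curve $t \mapsto \exp(tX)$ and applying the chain rule produces $\pa{\dif\Ad}_e(X)$; no hard estimates are needed. The final rearrangement $\scalar{Y, \ad_X(Z)} = -\scalar{\ad_X(Y), Z}$ and the identification of $-\ad_X$ as the adjoint of $\ad_X$ then follow immediately, completing the proof.
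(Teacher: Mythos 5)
Your proposal is correct and follows exactly the paper's argument: differentiate the $\Ad$-invariance identity $\scalar{\Ad_{\exp(tX)}(Y), \Ad_{\exp(tX)}(Z)} = \scalar{Y,Z}$ at $t=0$ and use $\ad_X(W) = \deriv{t}\pa{\Ad_{\exp(tX)}(W)}\mid_0$ to conclude. You simply spell out the product rule and the chain-rule justification that the paper leaves implicit.
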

\begin{proof}
    We have that, by definition
    \[
        \ad_X(Y) = \deriv{t}\pa{\Ad_{\exp\pa{tX}}(Y)}\mid_0,
    \]
    so, deriving the equation
    \[
        \scalar{\Ad_{\exp\pa{tX}}(Y), \Ad_{\exp\pa{tX}}(Z)} = \scalar{Y, Z}
    \]
    with respect to $t$ we get the result.
\end{proof}

With this result in hand, we can prove a rather useful relation between the geometry of the Lie group and its algebraic structure.
\begin{proposition}\label{prop:connection_curvature}
    Let $G$ be a Lie group equipped with a bi-invariant metric. If $X, Y$ are left-invariant vector fields, we have that their Levi-Civita connection and their sectional curvature are given by
    \begin{gather*}
        \conn_X Y = \frac{1}{2}[X,Y]\\
        \kappa\pa{X,Y} = \frac{1}{4}\norm{[X,Y]}^2.
    \end{gather*}
    The sectional curvature formula holds whenever $X$ and $Y$ are orthonormal.
\end{proposition}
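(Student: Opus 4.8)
The plan is to derive both identities from the Koszul formula for the Levi-Civita connection together with the skew-adjointness of $\ad$ supplied by~\Cref{lemma:ad_unitary}. Recall that for arbitrary vector fields the Koszul formula reads
\[
    2\scalar{\conn_X Y, Z} = X\scalar{Y,Z} + Y\scalar{X,Z} - Z\scalar{X,Y} + \scalar{[X,Y],Z} - \scalar{[X,Z],Y} - \scalar{[Y,Z],X}.
\]
The first observation is that when $X,Y,Z$ are left-invariant and the metric is left-invariant, each of the functions $\scalar{Y,Z}$, $\scalar{X,Z}$, $\scalar{X,Y}$ is constant on $G$, so the three derivative terms vanish and only the three bracket terms survive.

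Next I would rewrite the last two bracket terms using~\Cref{lemma:ad_unitary}, which is precisely the pair of identities $\scalar{[X,Z],Y} = -\scalar{Z,[X,Y]} = -\scalar{[X,Y],Z}$ and $\scalar{[Y,Z],X} = -\scalar{Z,[Y,X]} = \scalar{[X,Y],Z}$. Substituting these and collecting terms collapses the right-hand side to a single copy of $\scalar{[X,Y],Z}$, giving $2\scalar{\conn_X Y, Z} = \scalar{[X,Y],Z}$ for every left-invariant $Z$. Since left-invariant fields span each tangent space and the metric is nondegenerate, this yields $\conn_X Y = \tfrac12[X,Y]$.

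For the curvature I would feed the connection formula into the tensor $R(X,Y)Y = \conn_X\conn_Y Y - \conn_Y\conn_X Y - \conn_{[X,Y]}Y$. Since $\conn_Y Y = \tfrac12[Y,Y] = 0$, the first term drops out; evaluating the remaining two via $\conn_X Y = \tfrac12[X,Y]$ and the antisymmetry $[[X,Y],Y] = -[Y,[X,Y]]$ gives $R(X,Y)Y = \tfrac14[Y,[X,Y]]$. Pairing with $X$ and applying~\Cref{lemma:ad_unitary} once more turns $\scalar{[Y,[X,Y]],X}$ into $\scalar{[X,Y],[X,Y]} = \norm{[X,Y]}^2$, so for orthonormal $X,Y$ (where the sectional-curvature denominator equals one) we obtain $\kappa(X,Y) = \scalar{R(X,Y)Y,X} = \tfrac14\norm{[X,Y]}^2$.

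The only delicate point is sign bookkeeping: the two applications of~\Cref{lemma:ad_unitary} in the connection step contribute opposite signs that must combine correctly for the three bracket terms to collapse to one, and the curvature step hinges on the antisymmetry sign in $[[X,Y],Y]=-[Y,[X,Y]]$. Beyond this there is no analytic difficulty — once the derivative terms are eliminated, everything reduces to finite-dimensional linear algebra on $\g$.
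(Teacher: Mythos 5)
Your proposal is correct and follows essentially the same route as the paper: the Koszul formula with the derivative terms killed by left-invariance, \Cref{lemma:ad_unitary} to collapse the bracket terms, and then substitution of $\conn_X Y = \tfrac12[X,Y]$ into the curvature tensor. The only (harmless) difference is that you compute $R(X,Y)Y$ directly using $\conn_Y Y = 0$, whereas the paper derives the general formula $R(X,Y)Z = \tfrac14[Z,[X,Y]]$ via the Jacobi identity before specializing; your sign bookkeeping checks out in both steps.
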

\begin{proof}
    For left-invariant vector fields $X,Y,Z$, the Koszul formula gives
    \[
        \scalar{\conn_X Y, Z} = \frac{1}{2}\pa{X\scalar{Y,Z} + Y\scalar{X,Z} - Z\scalar{X,Y} + \scalaraux{[X,Y]}{Z} -\scalaraux{[X,Z]}{Y} - \scalaraux{[Y,Z]}{X}}.
    \]
    The three first terms on the right vanish, since the angle between invariant vector fields is constant. Reordering the last three terms, using~\Cref{lemma:ad_unitary}, the fact that the Lie bracket is antisymmetric, and since invariant vector fields form a basis of the Lie algebra, the formula for the connection follows. Now, the curvature tensor is given by
    \[
        R(X,Y)Z = \conn_X \conn_Y Z - \conn_Y\conn_X Z - \conn_{[X,Y]}Z = \frac{1}{4}\pa{[X, [Y,Z]] - [Y, [X, Z]] - 2[[X,Y],Z]} = \frac{1}{4}[Z, [X,Y]].
    \]
    So the sectional curvature for $X, Y$ orthonormal is given by
    \[
        \kappa(X,Y) = \scalaraux{R(X,Y)Y}{X} = \frac{1}{4}\norm{[X,Y]}^2.\qedhere
    \]
\end{proof}

On a matrix Lie group equipped with a metric we have three different notions of exponential maps, namely the Lie exponential map, the Riemannian exponential map and the exponential of matrices. We will now show that if we consider the Riemannian exponential map at the identity element, these three concepts agree whenever the metric is bi-invariant.

\begin{proposition}[Equivalence of Riemannian and Lie exponential]
    Let $G$ be a Lie group equipped with a bi-invariant metric. Then, the Riemannian exponential at the identity $\exp_e$ and the Lie exponential $\exp$ agree.
\end{proposition}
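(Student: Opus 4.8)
The plan is to show that the one-parameter subgroups of $G$---the integral curves through $e$ of left-invariant vector fields---are exactly the geodesics emanating from the identity. Once this is established, the two maps coincide by uniqueness of geodesics: both send $X \in \g$ to the endpoint $\gamma(1)$ of the unique curve $\gamma$ with $\gamma(0) = e$ and $\gamma'(0) = X$, so they must agree.

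So I would fix $X \in \g$, let $X$ also denote the left-invariant vector field it generates, and set $\gamma(t) = \exp(tX)$, the associated one-parameter subgroup. By the definition of the Lie exponential, $\gamma$ is the integral curve through $e$ of the left-invariant field $X$, so its velocity is the restriction of that field to the curve, $\gamma'(t) = X_{\gamma(t)}$. To verify that $\gamma$ is a geodesic I would compute its covariant acceleration. Because $\gamma'$ is the restriction to $\gamma$ of the \emph{globally defined} left-invariant field $X$, the covariant derivative along the curve agrees with the ambient connection evaluated in the velocity direction, so
\[
    \conn_{\gamma'}\gamma' = \pa{\conn_X X}\big|_{\gamma}.
\]
Now I invoke the connection formula of \Cref{prop:connection_curvature}, which holds precisely because the metric is bi-invariant: $\conn_X X = \tfrac{1}{2}[X,X] = 0$, the bracket vanishing by antisymmetry. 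Hence $\conn_{\gamma'}\gamma' = 0$ and $\gamma$ is a geodesic.

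Finally I would close the argument using uniqueness of geodesics. The curve $\gamma$ is a geodesic with $\gamma(0) = e$ and $\gamma'(0) = X$, and the Riemannian geodesic with these initial conditions is unique, so it coincides with $\gamma$ on its whole domain. Evaluating at $t = 1$ gives $\exp_e(X) = \gamma(1) = \exp(X)$, as claimed. I expect the only genuine subtlety to be the displayed identity above: one must justify that differentiating $\gamma'$, which a priori is only a vector field defined \emph{along} $\gamma$, may be replaced by the covariant derivative of its global left-invariant extension. This is exactly where left-invariance of $X$---and, through \Cref{prop:connection_curvature}, the bi-invariance of the metric---is used; the remaining steps are a routine unwinding of the definitions of the two exponential maps.
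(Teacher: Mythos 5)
Your proposal is correct and follows exactly the paper's argument: identify $\exp(tX)$ as the integral curve of the left-invariant field $X$, apply the connection formula $\conn_X X = \tfrac{1}{2}[X,X] = 0$ from \Cref{prop:connection_curvature}, and conclude by uniqueness of geodesics. The only difference is that you spell out the (legitimate) step of replacing the covariant derivative along the curve by the ambient connection applied to the global left-invariant extension, which the paper leaves implicit.
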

\begin{proof}
    Fix a vector $X_e \in \g$ and consider the curve $\gamma(t) = \exp(tX_e)$. This curve is the integral curve of the invariant vector field defined by $X_e$, this is $\gamma'(t) = X(\gamma(t))$. For this reason, by~\Cref{prop:connection_curvature}
    \[
        \conn_{\gamma'}\gamma' = \frac{1}{2}[X,X] = 0.
    \]
    so $\exp(tX_e)$ is a geodesic and the result readily follows.
\end{proof}

\begin{proposition}[Equivalence of Lie exponential and exponential of matrices]
    Let $G$ be a matrix Lie group, that is, a closed subgroup of $\GL{n, \C}$. Then the matrix exponential $\exp_M$ and the Lie exponential $\exp$ agree.
\end{proposition}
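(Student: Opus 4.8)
The Lie exponential is defined, as recalled in the background section, by $\exp(X) = \gamma(1)$, where $\gamma \colon \R \to G$ is the one-parameter subgroup obtained as the integral curve through the identity of the left-invariant vector field $\tilde X$ determined by $X \in \g$. The plan is to make this vector field explicit for a matrix group, write down the ordinary differential equation that $\gamma$ must satisfy, and then observe that the curve $t \mapsto \exp_M(tX)$ solves the very same equation. The conclusion will follow from uniqueness of solutions.

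First I would identify $\tilde X$ concretely. On $\GL{n, \C}$, which is an open subset of $\C^{n \times n}$, the left translation $L_g \colon h \mapsto gh$ is the restriction of a linear map of $\C^{n \times n}$, so its differential at any point is again multiplication by $g$; in particular $\pa{\dif L_g}_e(X) = gX$. Restricting to the submanifold $G$, the left-invariant vector field determined by $X \in \g$ is therefore $\tilde X(g) = gX$, a tangent vector lying in $T_g G \subset \C^{n \times n}$. Consequently the defining curve $\gamma$ satisfies the linear matrix ODE
\[
    \gamma'(t) = \gamma(t)X, \qquad \gamma(0) = \I,
\]
which, since $G \subset \GL{n, \C} \subset \C^{n \times n}$, we may read as an ODE in the ambient algebra $\C^{n \times n}$.

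Next I would check that the matrix exponential solves the same equation. Writing $\beta(t) = \exp_M(tX) = \sum_{k \ge 0} \tfrac{t^k X^k}{k!}$ and differentiating term by term (justified by local uniform convergence of the series and of its derivative), one obtains $\beta'(t) = X\beta(t) = \beta(t)X$ together with $\beta(0) = \I$. Hence $\beta$ is also a solution of the linear initial value problem above.

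The conclusion then follows from uniqueness of solutions of linear ODEs in $\C^{n \times n}$: since $\gamma$ and $\beta$ solve the same initial value problem, they must coincide, and evaluating at $t = 1$ gives $\exp(X) = \gamma(1) = \beta(1) = \exp_M(X)$. The one point that needs care—which I expect to be the main obstacle—is that $\gamma$ is a priori a curve in $G$ while $\beta$ is a priori a curve in $\GL{n, \C}$, so they appear to live in different spaces. The resolution is precisely to regard both as solutions of the same equation in the common ambient space $\C^{n \times n}$ and to invoke uniqueness there. As a byproduct, $\beta(t) = \exp_M(tX)$ automatically remains in $G$, recovering the standard fact that $\exp_M$ maps $\g$ into $G$ for a matrix Lie group.
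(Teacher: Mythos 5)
Your proof is correct and takes essentially the same route as the paper: both identify the Lie exponential as the solution of a linear matrix ODE with initial condition $\I$ and match it against the matrix exponential via uniqueness (the paper uses the right-invariant field and the equation $\gamma'(t) = X\gamma(t)$, you use the left-invariant field and $\gamma'(t) = \gamma(t)X$, which is an immaterial difference since the one-parameter subgroup is an integral curve of both). Your version is simply more detailed, spelling out the identification $\tilde X(g) = gX$, the term-by-term differentiation, and the point about reading both curves in the ambient space $\C^{n \times n}$.
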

\begin{proof}
    The matrix exponential $\exp_{tX}$ can be expressed as the solution of the matrix differential equation
    \[
        \gamma'(t) = X\gamma(t) \qquad \gamma(0) = \I,\, t \in \R
    \]
    for $X \in \C^{n \times n} = \mathfrak{gl}(n, \C)$. This is exactly the equation that defines the Lie exponential map as the integral curve of a right-invariant vector field, that is, the Lie exponential.
\end{proof}

Finally, all these equivalences give a short proof of the fact that the Lie exponential map is surjective on a connected Lie group with a bi-invariant metric.
\begin{theorem}[Lie exponential surjectivity]\label{thm:surjectivity}
    Let $G$ be a connected Lie group equipped with a bi-invariant metric. The Lie exponential is surjective.
\end{theorem}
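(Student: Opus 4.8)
The plan is to reduce the claim about the Lie exponential to the corresponding statement about the Riemannian exponential map, which is governed by the Hopf--Rinow theorem. The two preceding propositions already identify, at the identity, the Lie exponential $\exp$, the Riemannian exponential $\exp_e$, and (for matrix groups) the exponential of matrices; so it suffices to show that $\exp_e \colon \g \to G$ is surjective. The first observation is that a bi-invariant metric makes every left translation $L_g$ an isometry, and these act transitively on $G$, so $G$ is a \emph{homogeneous} Riemannian manifold.

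Next I would establish that $G$ is complete. Homogeneity yields this almost for free: the Riemannian exponential $\exp_e$ is defined on some open ball $B_\epsilon(0) \subset T_eG$ of radius $\epsilon > 0$, and pushing this neighborhood forward by the isometries $L_g$ shows that the exponential at \emph{every} point is defined on a ball of the same radius $\epsilon$. This uniform lower bound lets one extend any geodesic by steps of fixed length $\epsilon$, so geodesics are defined for all time and $G$ is geodesically complete. By the Hopf--Rinow theorem this is equivalent to $G$ being complete as a metric space (alternatively, one may simply cite the background fact that a homogeneous Riemannian manifold is complete).

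Finally I would invoke the other half of Hopf--Rinow: since $G$ is connected and complete, any point $g \in G$ can be joined to $e$ by a minimizing geodesic $\gamma$ with $\gamma(0)=e$ and $\gamma(1)=g$. Writing $v = \gamma'(0) \in T_eG = \g$, this says exactly that $\exp_e(v) = g$, so $\exp_e$ is surjective. By the equivalence of the Riemannian and Lie exponential maps established above, $\exp = \exp_e$, whence $\exp(v) = g$; as $g$ was arbitrary, $\exp$ is surjective.

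The main obstacle is the completeness step: one must justify that the domain of the exponential map does not shrink to zero as the basepoint varies. Homogeneity is exactly what rules this out, by transporting a single neighborhood of $0 \in T_eG$ around the group with isometries, and this only requires left-invariance of the metric. It is worth flagging that bi-invariance enters the argument earlier and is genuinely needed there, namely in the identification of $\exp$ with $\exp_e$, which rests on the connection formula $\conn_X Y = \tfrac{1}{2}[X,Y]$; without it one-parameter subgroups would not in general be geodesics and the reduction to Hopf--Rinow would break down.
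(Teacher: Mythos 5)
Your proof is correct and takes essentially the same route as the paper: identify the Lie and Riemannian exponentials at the identity, use left translations (homogeneity from the invariant metric) to establish geodesic completeness, and conclude via Hopf--Rinow that every $g$ is reached by a geodesic $\exp(tX_e)$ from the identity. The only immaterial difference is in how completeness is obtained --- the paper transports the globally defined $\exp_e$ by isometries to see that the Riemannian exponential is defined on all of $TG$, while you argue via a uniform lower bound on the radius of definition and stepwise extension of geodesics; both are standard and valid.
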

\begin{proof}
    As the Lie exponential is defined in the whole Lie algebra, so is the map $\exp_e$. Since the metric is bi-invariant, we have that at a point $(g, v) \in TG$, $\exp_g(v) = L_g\pa{\exp_e\pa{\pa{\dif L_{g^{^-1}}}_g\pa{v}}}$ and since left-translations are diffeomorphisms, the Riemannian exponential is defined in the whole tangent bundle. Therefore, by the Hopf-Rinow theorem, this implies that there exists a geodesic between any two points. Since the geodesics starting at the identity are given by the curves $\gamma(t) = \exp(tX_e)$ for $X_e \in \g$, the result follows.
\end{proof}

Now that we have all the necessary tools, we shall return to the problem of studying the metric induced by the exponential parametrization.

As we have seen, the problem that we are interested in solving is
\[
    \min_{g \in G} f(g),
\]
where $G$ is a matrix Lie group equipped with an bi-invariant metric. The exponential parametrization maps this problem back to the Lie algebra
\[
    \min_{X \in \g} f(\exp(X)).
\]
Since $\g$ is a vector space, putting a basis on it we have that we can use all the classical toolbox developed for Euclidean spaces to approach a solution for this problem. In particular, in the context of neural networks, we are interested in studying first-order optimization methods that approach a solution to this problem, in particular, gradient descent methods. The gradient descent update step for this problem with learning rate $\eta > 0$ is given by
\[
    X \gets X - \eta\grad(f\circ \exp)(X),
\]
where the gradient is defined with respect to the metric, that is, it is the vector such that
\[
    \dif\pa{f \circ \exp}_X(Y) = \scalar{\grad\pa{f \circ \exp}\pa{X}, Y}.
\]

To study this optimization method, we first have to make sense of the gradient $\grad\pa{f \circ \exp}(X)$. To do so, we will make use of the differential of the exponential map.

\begin{proposition}\label{prop:diferential_exponential}
    The differential of the exponential map on a matrix Lie group is given by the formula
    \[
        \pa{\dif \exp}_X(Y) = e^X\sum_{k=0}^\infty \frac{\pa{-\ad_X}^k}{(k+1)!}\pa{Y} \qquad \forall X,Y \in \g.
    \]
\end{proposition}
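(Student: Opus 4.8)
The plan is to reduce the computation of the differential to solving a single inhomogeneous linear matrix ODE, and then to expand the resulting integral as a power series in $\ad_X$.

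First I would write the differential as a directional derivative, $\pa{\dif \exp}_X(Y) = \deriv{t}\exp(X+tY)\mid_0$, and introduce the auxiliary matrix-valued function $F(s) = \deriv{t}\exp\pa{s(X+tY)}\mid_0$ for $s \in [0,1]$, so that the quantity we want is $F(1)$. Differentiating the defining identity $\frac{\partial}{\partial s}\exp\pa{s(X+tY)} = (X+tY)\exp\pa{s(X+tY)}$ with respect to $t$, setting $t=0$, and interchanging the two (smooth) partial derivatives yields the inhomogeneous linear ODE $F'(s) = X F(s) + Y e^{sX}$ with initial condition $F(0)=0$.

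Next I would solve this by Duhamel's formula (variation of parameters), obtaining $F(s) = \int_0^s e^{(s-u)X} Y e^{uX}\,\dif u$. Evaluating at $s=1$ and factoring $e^X$ out on the left gives $\pa{\dif \exp}_X(Y) = e^X \int_0^1 e^{-uX} Y e^{uX}\,\dif u = e^X \int_0^1 \Ad_{e^{-uX}}(Y)\,\dif u$. The last step uses the standard identity $\Ad_{e^{-uX}} = e^{-u\,\ad_X}$, which follows from $\Ad$ being a homomorphism together with the relation $\ad_X = \deriv{t}\Ad_{\exp(tX)}\mid_0$ already used in the proof of \Cref{lemma:ad_unitary}. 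Expanding this operator exponential as a power series and integrating term by term, the scalar integrals $\int_0^1 (-u)^k\,\dif u = (-1)^k/(k+1)$ assemble into the coefficients $(-1)^k/(k+1)!$, producing exactly $e^X \sum_{k=0}^\infty \frac{\pa{-\ad_X}^k}{(k+1)!}(Y)$.

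The only genuinely delicate point is the analytic bookkeeping: justifying differentiation under the exponential series in the ODE step and, above all, the term-by-term integration of the series for $e^{-u\,\ad_X}$. Both are harmless, since the exponential series converges absolutely and uniformly for $u \in [0,1]$ on bounded sets, so uniform convergence of the partial sums (or dominated convergence) applies; I would state this once rather than belabor it. As an alternative that sidesteps the ODE entirely, one can differentiate the series $\exp(X+tY) = \sum_n (X+tY)^n/n!$ directly to obtain $\sum_{n\ge1}\frac{1}{n!}\sum_{j=0}^{n-1} X^j Y X^{n-1-j}$, rewrite the inner sum via the commuting operators $L_X, R_X$ of left and right multiplication as $\frac{e^{L_X}-e^{R_X}}{L_X-R_X}(Y)$, and match this with the claimed series using $\ad_X = L_X - R_X$ together with the generating function $\sum_{k\ge0}\frac{(-z)^k}{(k+1)!} = \frac{1-e^{-z}}{z}$. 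This algebraic route is shorter but requires some care in interpreting the operator $1/(L_X-R_X)$.
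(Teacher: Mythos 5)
Your argument is correct and complete. Note that the paper does not actually prove this proposition: it simply cites \cite{hall2015lie}, Theorem 5.4. Your derivation is essentially the standard textbook proof of that cited result, so what your proposal buys relative to the paper is a self-contained argument where the paper only offers a pointer to the literature. The individual steps all check out: the reduction to the inhomogeneous linear ODE $F'(s)=XF(s)+Ye^{sX}$, $F(0)=0$, is legitimate because $(s,t)\mapsto\exp\pa{s(X+tY)}$ is entire, so the mixed partials commute; Duhamel gives $F(1)=\int_0^1 e^{(1-u)X}Ye^{uX}\,\dif u=e^X\int_0^1 \Ad_{e^{-uX}}(Y)\,\dif u$; the identity $\Ad_{e^{-uX}}=e^{-u\ad_X}$ holds because $u\mapsto\Ad_{e^{-uX}}$ is a one-parameter group of operators on $\g$ with generator $-\ad_X$; and term-by-term integration of the uniformly convergent operator series produces the coefficients $(-1)^k/(k+1)!$ exactly as claimed. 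Your alternative series route is also sound, provided one reads $\frac{e^{L_X}-e^{R_X}}{L_X-R_X}$ as the entire function $(z,w)\mapsto\frac{e^z-e^w}{z-w}$ evaluated on the commuting operators $L_X$ and $R_X$, which sidesteps any genuine division by $L_X-R_X$; this is the caveat you already flag.
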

\begin{proof}
    \cite{hall2015lie} Theorem 5.4.
\end{proof}
An analogous formula still holds in the general case, but the proof is more delicate. The powers of the adjoint representation are to be thought as the composition of endomorphisms on $\g$. For this reason, this formula can be also expressed as
\[
    \pa{\dif \exp}_X(Y) = e^X\pa{Y - \frac{1}{2}[X,Y] + \frac{1}{6}[X, [X, Y]] - \dots}.
\]
Yet another way of looking at this expression is by defining the function
\[
    \deffun{\phi : \End\pa{\g} -> \End\pa{\g}; X -> \frac{1-e^{-X}}{X}}
\]
so that
\[
    \pa{\dif \exp}_X = \dif L_{e^X} \circ \phi\pa{\ad_X}.
\]
In this case, the fraction that defines $\phi$ is just a formal expression to refer to the formal series defined in~\Cref{prop:diferential_exponential}.

From this we can compute the gradient of $f \circ \exp$.
\begin{proposition}\label{prop:gradient}
    Let $\deffun{f : G -> \R;}$ be a function defined on a matrix Lie group equipped with a bi-invariant metric. For a matrix $A \in \g$ let $B = e^A$. We have
    \[
    \grad\pa{f \circ \exp}(A) = B\pa{\dif \exp}_{-A}\pa{B^{-1} \grad f\pa{B}} = \sum_{k=0}^\infty \frac{\pa{\ad_A}^k}{(k+1)!}\pa{e^{-A}\grad f\pa{B}} .
    \]
\end{proposition}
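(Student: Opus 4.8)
The plan is to reduce the computation of the gradient $\grad\pa{f \circ \exp}(A)$ (taken with respect to the inner product on $\g$) to an adjoint computation on the Lie algebra, using the chain rule, the explicit formula for $\dif \exp$, and the left-invariance of the metric. Throughout, write $B = e^A$ and let $\phi\pa{\ad_A} = \sum_{k=0}^\infty \frac{\pa{-\ad_A}^k}{(k+1)!}$ be the endomorphism of $\g$ from \Cref{prop:diferential_exponential}, so that $\pa{\dif \exp}_A = \dif L_B \circ \phi\pa{\ad_A}$, where on a matrix Lie group $\dif L_B\pa{W} = BW$.

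First I would fix an arbitrary $Y \in \g$ and apply the chain rule together with \Cref{prop:diferential_exponential}:
\[
    \dif\pa{f \circ \exp}_A(Y) = \dif f_B\pa{\pa{\dif \exp}_A(Y)} = \dif f_B\pa{\dif L_B\pa{\phi\pa{\ad_A}(Y)}}.
\]
Since $\grad f(B)$ denotes the Riemannian gradient of $f$ on $G$, by definition $\dif f_B(Z) = \scalar{\grad f(B), Z}$ for every $Z \in T_B G$, so the right-hand side equals $\scalar{\grad f(B), \dif L_B\pa{\phi\pa{\ad_A}(Y)}}$. Now I would use that the metric is left-invariant, i.e.\ that $L_{B^{-1}}$ is an isometry with $\dif L_{B^{-1}} \circ \dif L_B = \Id$; applying $\dif L_{B^{-1}}$ to both slots moves the inner product to the identity and gives
\[
    \dif\pa{f \circ \exp}_A(Y) = \scalar{B^{-1}\grad f(B), \phi\pa{\ad_A}(Y)},
\]
where $B^{-1}\grad f(B) = \dif L_{B^{-1}}\pa{\grad f(B)} \in \g$.

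The final step is to transfer $\phi\pa{\ad_A}$ across the inner product via its adjoint, and here is where \Cref{lemma:ad_unitary} is essential: because the metric is $\Ad$-invariant, $\ad_A$ is skew-adjoint, $\ad_A^\ast = -\ad_A$. Hence $\pa{-\ad_A}^\ast = \ad_A$ and the adjoint of $\phi\pa{\ad_A}$ is $\sum_{k=0}^\infty \frac{\pa{\ad_A}^k}{(k+1)!}$, obtained by taking adjoints term by term in the convergent series. Reading off the gradient from $\dif\pa{f \circ \exp}_A(Y) = \scalar{\grad\pa{f \circ \exp}(A), Y}$ then yields
\[
    \grad\pa{f \circ \exp}(A) = \sum_{k=0}^\infty \frac{\pa{\ad_A}^k}{(k+1)!}\pa{B^{-1}\grad f(B)} = \sum_{k=0}^\infty \frac{\pa{\ad_A}^k}{(k+1)!}\pa{e^{-A}\grad f(B)},
\]
which is the second equality (using $B^{-1} = e^{-A}$). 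For the first equality I would note that $\sum_{k\ge 0}\frac{\pa{\ad_A}^k}{(k+1)!} = \phi\pa{\ad_{-A}}$ since $\ad_{-A} = -\ad_A$, and that $\pa{\dif \exp}_{-A} = \dif L_{B^{-1}} \circ \phi\pa{\ad_{-A}}$ by \Cref{prop:diferential_exponential}; composing with $\dif L_B$ on the left turns $\phi\pa{\ad_{-A}}$ into $B\pa{\dif \exp}_{-A}$, giving $\grad\pa{f \circ \exp}(A) = B\pa{\dif \exp}_{-A}\pa{B^{-1}\grad f(B)}$. The only delicate point is the bookkeeping of the left-translation isometry and the adjoint of the power series in $\ad_A$; everything else is the chain rule and a term-by-term manipulation of a convergent series.
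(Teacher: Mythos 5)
Your proof is correct and follows essentially the same route as the paper's: chain rule, the factorization $\pa{\dif\exp}_A = \dif L_B \circ \phi\pa{\ad_A}$ from \Cref{prop:diferential_exponential}, and the adjoint computation via left-invariance of the metric together with the skew-adjointness $\ad_A^\ast = -\ad_A$ of \Cref{lemma:ad_unitary}. The only difference is cosmetic: you transfer $\dif L_B$ and $\phi\pa{\ad_A}$ across the inner product one at a time, whereas the paper first assembles the full adjoint $\pa{\dif\exp}_A^\ast = \phi\pa{\ad_{-A}}\circ\dif L_{e^{-A}}$ and then applies it to $\grad f(B)$.
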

\begin{proof}
    Let $U \in \g$. By the chain rule, we have
    \[
        \pa{\dif \pa{f \circ \exp}}_A(U) = \pa{\dif f}_B \circ \pa{\dif \exp}_A (U).
    \]
    In terms of the gradient of $f$ with respect to the metric this is equivalent to
    \begin{align*}
        \pa{\dif \pa{f \circ \exp}}_A(U) &= \scalar{\grad f(B), \pa{\dif \exp}_A (U)} \\
                                         &= \scalar{\pa{\dif \exp}^\ast_A\pa{\grad f(B)}, U}
    \end{align*}
    which gives
    \[
        \grad\pa{f \circ \exp}(A) = \pa{\dif \exp}^\ast_A\pa{\grad f(B)}.
    \]
    Now we just have to compute the adjoint of the differential of the exponential function. This is now simple since
    \begin{align*}
        \pa{\dif \exp}^\ast_A &= \pa{\dif L_{e^A} \circ \phi\pa{\ad_A}}^\ast \\
                              &= \phi\pa{\ad_A}^\ast \circ \dif L_{e^{-A}} \\
                              &= \phi\pa{\ad_A^\ast} \circ \dif L_{e^{-A}} \\
                              &= \phi\pa{\ad_{-A}} \circ \dif L_{e^{-A}},
    \end{align*}
    where the second equality follows from the product being left-invariant, the third one from $\phi$ being analytic and the last one from~\Cref{lemma:ad_unitary}.
\end{proof}

Now we can explicitly define the update rule for the exponential parametrization
\[
    \deffun{\widehat{r} : TG -> G ; \pa{e^A, U} -> \exp\pa{A + \phi\pa{\ad_{-A}}\pa{e^{-A}U}}}.
\]
We can then study the gradient flow induced by the exponential parametrization by means of $\widehat{r}$. If $\widehat{r}$ were a retraction, then the flow induced by the exponential parametrization would have similar properties as that of Riemannian gradient descent, as shown in~\cite{boumal2016global}. It turns out that the exponential parametrization induces a different flow.

\begin{theorem}\label{thm:retract}
    Let $G$ be a connected matrix Lie group equipped with a bi-invariant metric. The function $\widehat{r}$ is a retraction if and only if $G$ is abelian.
\end{theorem}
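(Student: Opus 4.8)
The plan is to verify the two defining properties of a retraction for $\widehat r$ directly, reducing everything to a scalar question about the spectrum of $\ad_A$. The first property, $\widehat r_{e^A}(0) = e^A$, is immediate from the definition since $\phi\pa{\ad_{-A}}(0) = 0$, so that $\widehat r_{e^A}(0) = \exp(A) = e^A$. Hence the entire content lies in the second property, $\pa{\dif \widehat r_{e^A}}_0 = \Id$, and the claim becomes that this holds at every base point precisely when $G$ is abelian.

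First I would rewrite $\widehat r$ in a form amenable to differentiation. Using the computation of the adjoint $\pa{\dif\exp}^\ast_A = \phi\pa{\ad_{-A}} \circ \dif L_{e^{-A}}$ carried out in the proof of \Cref{prop:gradient}, together with $\dif L_{e^{-A}}(U) = e^{-A}U$, the map becomes $\widehat r_{e^A}(U) = \exp\pa{A + \pa{\dif\exp}^\ast_A(U)}$. Since $U \mapsto A + \pa{\dif\exp}^\ast_A(U)$ is affine with linear part $\pa{\dif\exp}^\ast_A$ and value $A$ at $U = 0$, the chain rule gives $\pa{\dif \widehat r_{e^A}}_0 = \pa{\dif\exp}_A \circ \pa{\dif\exp}^\ast_A$. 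Substituting $\pa{\dif\exp}_A = \dif L_{e^A}\circ \phi\pa{\ad_A}$ from \Cref{prop:diferential_exponential} and conjugating away the mutually inverse isometries $\dif L_{e^A}$ and $\dif L_{e^{-A}}$, the condition $\pa{\dif \widehat r_{e^A}}_0 = \Id$ on $T_{e^A}G$ is equivalent to the operator identity $\phi\pa{\ad_A}\circ \phi\pa{\ad_{-A}} = \Id_\g$ on the Lie algebra.

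Next I would analyze this operator identity through functional calculus. The scalar function $g(z) \defi \phi(z)\phi(-z)$ simplifies to $g(z) = \frac{e^z + e^{-z} - 2}{z^2} = \pa{\frac{\sinh(z/2)}{z/2}}^2$. Because the metric is bi-invariant, \Cref{lemma:ad_unitary} gives $\ad_A^\ast = -\ad_A$, so $\ad_A$ is skew-adjoint, hence normal and diagonalizable over $\C$ with purely imaginary eigenvalues $i\lambda_1,\dots,i\lambda_m$ with $\lambda_j \in \R$. Then $\phi\pa{\ad_A}\phi\pa{\ad_{-A}} = g\pa{\ad_A}$ is diagonalizable with eigenvalues $g(i\lambda_j) = \pa{\frac{\sin(\lambda_j/2)}{\lambda_j/2}}^2$, and equals $\Id$ iff each of these equals $1$. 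The elementary inequality $\abs{\sin x} \le \abs{x}$, with equality only at $x = 0$, forces every $\lambda_j = 0$; by diagonalizability this means $\ad_A = 0$. Thus $\pa{\dif\widehat r_{e^A}}_0 = \Id$ holds exactly when $\ad_A = 0$.

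Finally I would assemble the equivalence. For $\widehat r$ to be a retraction this must hold at every base point; restricting to a neighborhood of $0 \in \g$ on which $\exp$ is a diffeomorphism (so each such $A$ genuinely parametrizes a distinct point) shows $\ad_A = 0$ for all $A$ near the origin, hence for all $A \in \g$ by linearity. This says $[X,Y] = 0$ for all $X, Y \in \g$, i.e.\ $\g$ is abelian, which for the connected group $G$ is equivalent to $G$ being abelian. Conversely, if $G$ is abelian then $\ad \equiv 0$, so $\phi\pa{\ad_A} = \phi(0) = \Id$ and both retraction axioms hold trivially. I expect the only delicate point to be the passage from the operator identity to the eigenvalue condition: it relies essentially on bi-invariance, through \Cref{lemma:ad_unitary}, to guarantee that $\ad_A$ is diagonalizable, without which $g\pa{\ad_A} = \Id$ would not reduce to a condition on eigenvalues alone.
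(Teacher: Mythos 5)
Your proof is correct, and after the shared opening reduction it takes a genuinely different route from the paper's. Both arguments agree up to the chain-rule computation $\pa{\dif\widehat{r}_{e^A}}_0 = \pa{\dif\exp}_A \circ \pa{\dif\exp}_A^\ast$, so that everything hinges on when this composition is the identity. The paper reads this condition as saying that $\exp$ is a local isometry from the flat space $\g$ onto $G$, and then invokes the curvature formula $\kappa(X,Y)=\tfrac{1}{4}\norm{[X,Y]}^2$ of \Cref{prop:connection_curvature}: local isometries preserve sectional curvature, so $G$ would have to be flat, forcing the bracket to vanish. You instead conjugate away the translations $\dif L_{e^{\pm A}}$ to reduce to the operator identity $\phi\pa{\ad_A}\phi\pa{-\ad_A}=\Id_\g$ and settle it by functional calculus: skew-adjointness of $\ad_A$ (\Cref{lemma:ad_unitary}) gives diagonalizability with purely imaginary spectrum, and the scalar computation $g(i\lambda)=\pa{\sin(\lambda/2)/(\lambda/2)}^2=1 \Leftrightarrow \lambda=0$ finishes. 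Your route is more elementary---it bypasses the Levi-Civita and curvature computations entirely, needing only linear algebra and $\abs{\sin x}\le\abs{x}$---and it yields strictly more information, namely the exact spectral factor by which $\pa{\dif\widehat{r}_{e^A}}_0$ contracts each eigendirection of $\ad_A$, which quantifies how far $\widehat{r}$ is from being a retraction. The paper's route is shorter given that \Cref{prop:connection_curvature} is already in place, and it identifies the obstruction conceptually as curvature. One point you handle more explicitly than the paper: the condition is vacuous at the base point $A=0$, so one must see that it fails at some nonzero $A$; your neighbourhood-plus-homogeneity argument ($\ad_{tA}=t\ad_A$) closes this cleanly.
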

\begin{proof}
    It is clear that $\widehat{r}_g(0) = g$ for every $g \in G$. Let $A \in \g$ and $B = e^A$ and let $U \in T_BG$. By the chain rule we have that
    \[
        \pa{\dif \widehat{r}_B}_0(U) = \pa{\dif \exp}_A\pa{\pa{\dif \exp}_A^\ast\pa{U}}.
    \]
    The map $\widehat{r}$ is a retraction if and only if
    \[
        \pa{\dif \exp}_A\pa{\pa{\dif \exp}^\ast_A\pa{U}} = U
    \]
    holds for every $U \in T_BG$. This is equivalent to having
    \[
        \scalar{\pa{\dif \exp}_A\pa{\pa{\dif \exp}_A^\ast\pa{U}}, H} = \scalar{U, H}
    \]
    for every $H \in T_BG$. Taking adjoints and since the metric is left-invariant, using the formula for the adjoint of the differential of the exponential map computed in~\Cref{prop:gradient}, or equivalently
    \[
        \scalar{\pa{\dif \exp}_{-A}\pa{X}, \pa{\dif \exp}_{-A}\pa{Y}} = \scalar{X, Y} \qquad \forall X, Y\in \g.
    \]
    In other words, $\widehat{r}$ is a retraction if and only if the Lie exponential map is a local isometry.

    Now, the Lie exponential maps $\g$ into $G$, but $\g$ equipped with its metric is flat, so it has constant sectional curvature zero. On the other hand, the sectional curvature of $G$ is given by $\kappa(X,Y) = \frac{1}{4}\norm{[X,Y]}^2$. Recall that a Lie group is abelian if and only if its Lie bracket is zero.

    If the Lie bracket is zero, $\pa{\dif \exp}_A = \pa{\dif L_{e^A}}_e$, and it is an isometry.

    Conversely, if it is an isometry, it preserves the sectional curvature, so the Lie bracket has to be identically zero, hence the group is Abelian.
\end{proof}

In the abelian case, we do not only have that $\widehat{r}$ is a retraction, but also that the update rule for the exponential parametrization agrees with that of Riemannian gradient descent. Recall that the Riemannian gradient descent rule for a gradient $U$ and a step-size $\eta$ is given by
\[
    e^A e^{-\eta e^{-A}U}.
\]
On an abelian group we have that $e^X e^Y = e^{X + Y}$. Furthermore, since the adjoint representation is zero, $\pa{\dif \exp}_A(U) = e^AU$. Putting these two things together we have
\[
    \widehat{r}(e^A, -\eta U) = e^{A -\eta e^{-A} U} = e^A e^{-\eta e^{-A} U}.
\]

\section{Maximal Normal Neighborhood of the Identity}\label{sec:appendix_fundamental_domain}
\begin{definition}[Normal Neighborhood]
    Let $V \subset T_p\Mani$ be a neighborhood of $0$ such that the Riemannian exponential map $\exp_p$ is a diffeomorphism. Then we say that $\exp_p V$ is a \emph{normal neighborhood} of $p$.
\end{definition}

Given that on a matrix Lie group $\pa{\dif\exp}_\I = \Id$, by the inverse function theorem, there exists a normal neighborhood around the identity matrix. In this section we will prove that the maximal open normal neighborhood of $\SO{n}$ (resp.\ $\U{n}$) covers almost all the group. By almost all the group we mean that the closure of the normal neighborhood is equal to the group. We do so by studying at which points we have that the map $\exp$ is no longer an immersion, or in other words, we look at the points $A \in \g$ at which $\det\pa{\pa{\dif \exp}_A} = 0$. We will prove so for the group $\GL{n, \C}$, so that the arguments readily generalize to any matrix Lie group.

Recall the definition of the matrix-valued function $\phi$ defined on the space of endomorphisms of the Lie algebra of $\GL{n, \C}$. Specifically, since $\mathfrak{gl}\pa{n, \C} \iso \C^{n \times n}$, we have that $\End\pa{\mathfrak{gl}\pa{n, \C}} \iso \C^{n^2 \times n^2}$
\[
    \deffun{\phi : \C^{n^2 \times n^2} -> \C^{n^2 \times n^2}; A -> \frac{1-e^{-A}}{A}=\sum_{k=0}^\infty \frac{\pa{-A}^k}{\pa{k+1}!}}.
\]
Using this function, we can factorize the differential of the exponential function on a matrix Lie group as
\[
    \pa{\dif \exp}_A = e^A\phi\pa{\ad_A}.
\]

Let us now compute the maximal normal neighborhood of the identity. This result is classic, but the proof here is a simplification of the classical one using an approximation argument.
\begin{theorem}\label{prop:domain_of_definition}
    Let $G$ be a compact and connected matrix Lie group. The exponential function is analytic, with analytic inverse on a bounded open neighborhood of the origin given by
    \[
        U = \set{A \in \g | \abs{\Im\pa{\lambda_i(A)}} < \pi}.
    \]
\end{theorem}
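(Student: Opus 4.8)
The plan is to pinpoint exactly where $\exp$ fails to be a local diffeomorphism and then to upgrade local invertibility to a global statement on $U$. First I would use the factorization $\pa{\dif \exp}_A = e^A\,\phi(\ad_A)$ recorded just above the statement, together with the fact that $e^A \in G \subseteq \GL{n,\C}$ is always invertible, to conclude that $\pa{\dif \exp}_A$ is nonsingular if and only if $\phi(\ad_A)$ is. Treating $\phi(z) = (1-e^{-z})/z$ as a scalar analytic function, its only zeros are at $z = 2\pi i k$ with $k \in \Z\setminus\{0\}$, since $\phi(0) = 1$.

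Next I would compute the spectrum of $\phi(\ad_A)$. Writing $\ad_A = L_A - R_A$ for the commuting left- and right-multiplication operators, its eigenvalues are the differences $\lambda_i(A) - \lambda_j(A)$ of eigenvalues of $A$, so the eigenvalues of $\phi(\ad_A)$ are $\phi\pa{\lambda_i(A) - \lambda_j(A)}$. Hence $\pa{\dif \exp}_A$ is singular precisely when some difference $\lambda_i(A) - \lambda_j(A)$ lies in $2\pi i\Z\setminus\{0\}$. At this point I would invoke compactness of $G$: boundedness of the one-parameter subgroups $e^{tA}$ forces every eigenvalue of $A \in \g$ to be purely imaginary, and (after conjugating $G$ into $\U{n}$ by the averaging trick) each $A$ is normal. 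For $A \in U$, writing $\lambda_k(A) = i\theta_k$ with $\abs{\theta_k} < \pi$, every difference satisfies $\abs{\theta_i - \theta_j} < 2\pi$, so it can equal a multiple of $2\pi$ only when it is $0$. Thus $\pa{\dif \exp}_A$ is nonsingular throughout $U$; moreover $U$ is open and bounded, since normality gives $\norm{A}$ equal to the spectral radius $\max_k \abs{\theta_k} < \pi$. By the analytic inverse function theorem, $\exp$ is then a local analytic diffeomorphism at every point of $U$.

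The hard part will be promoting this to a global statement: that $\exp$ is injective on $U$ and admits a single analytic inverse there. For this I would use the spectral decomposition of a normal $A = \sum_k i\theta_k P_k$, so that $e^A = \sum_k e^{i\theta_k} P_k$. Because the $\theta_k$ are distinct and lie in $(-\pi,\pi)$, the scalars $e^{i\theta_k}$ are distinct, so the spectral projections of $e^A$ coincide with the $P_k$, and each eigenvalue $e^{i\theta_k}$ determines $\theta_k$ uniquely through the principal branch of the logarithm. Consequently $A$ is recovered from $e^A$, which simultaneously yields injectivity of $\exp$ on $U$ and identifies its inverse with the principal matrix logarithm $\fn{Log}$. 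Since for $A \in U$ the eigenvalues $e^{i\theta_k}$ of $e^A$ never meet the negative real axis (that would force $\theta_k = \pm\pi$), $\fn{Log}$ is analytic on $\exp(U)$, furnishing the analytic inverse. Combining analyticity of $\exp$, global injectivity, and the nonsingular differential shows that $\exp$ restricts to an analytic diffeomorphism of $U$ onto $\exp(U)$ with analytic inverse.

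Finally, to support the \emph{maximal} normal neighborhood framing of the section, I would observe that the differential analysis shows $\pa{\dif \exp}_A$ degenerates exactly as some $\abs{\theta_k}$ reaches $\pi$, so $U$ is the largest neighborhood of the origin on which $\exp$ is an immersion. The genuine obstacle is the injectivity/analytic-inverse step: the normality of the elements of $\g$ and the fact that their eigenvalues sit in a strip of width $2\pi$ about the real axis are precisely what make the principal logarithm a well-defined analytic section over $\exp(U)$.
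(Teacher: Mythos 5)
Your proposal is correct and follows essentially the same route as the paper: factor $\pa{\dif \exp}_A = e^A\,\phi\pa{\ad_A}$, identify the spectrum of $\ad_A$ as the differences $\lambda_i - \lambda_j$, locate the zeros of $\phi$ at $2\pi i k$ for $k \neq 0$, and use compactness to place the eigenvalues of elements of $\g$ on the imaginary axis so that no difference reaches $2\pi i$ inside $U$. If anything, your injectivity step---recovering a normal $A$ from the spectral projections of $e^A$ together with the principal matrix logarithm---is more complete than the paper's one-line appeal to the injectivity of $e^z$ on the strip $\abs{\Im\pa{z}} < \pi$, which by itself only recovers the eigenvalues rather than the matrix.
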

\begin{proof}
    Given that $L_{e^A}$ is a diffemorphism, we are interested in studying when the matrix defined by the function $\phi\pa{\ad_A}$ stops being full-rank and when is it injective.

First, note that if the eigenvalues of $A$ are $\lambda_i$, then the eigenvalues of $g(A)$ with $g$ a complex analytic function well-defined on $\set{\lambda_i}$ are $\set{g(\lambda_i)}$. This is clear for diagonalizable matrices. Since these are dense in $\C^{n \times n}$, given that eigenvalues are continuous functions of the matrix, it readily generalizes to arbitrary matrices.

Let $A \in \C^{n^2 \times n^2}$ and let $\lambda_{i,j}$ for $1 \leq i, j \leq n$ be its eigenvalues. Then, $\phi$ is non-singular when $\phi(\lambda_{i,j}) \neq 0$ for every $\lambda_{i,j}$. Equivalently, when $\lambda_{i,j} \neq 2\pi ki$ for $k \in \Z \backslash \set{0}$.

Let us now compute the eigenvalues of $\ad_A$ using the same trick as above. Let $A \in \C^{n \times n}$ and suppose that it is diagonalizable with eigenvalues $\set{\lambda_i}$. Let $u_i$ be the eigenvectors of $A$ and $v_i$ the eigenvectors of $\trans{A}$---which is also diagonalizable with the same eigenvalues---. Since
\[
    \ad_A(u_i \otimes v_j) = \pa{\lambda_i - \lambda_j}u_i \otimes v_j
\]
we have that $\set{u_i \otimes v_j}$ are the eigenvectors of $\ad_A$ with eigenvalues $\lambda_{i,j} \defi \lambda_i - \lambda_j$. Now, using the same continuity argument as above have that these are the eigenvalues of $\ad_A$ for every $A \in \C^{n \times n}$.

From all this, we draw that $\pa{\dif \exp}_A$ is singular whenever $A$ has two eigenvalues that differ by a non-zero integer multiple of $2\pi i$.

Finally, on a compact matrix Lie group every matrix is diagonalizable over the complex numbers, so the exponential acts on the eigenvalues, but the complex variable function $e^z$ is injective on $\set{z \in \C | \abs{\Im\pa{z}} < \pi}$, so the Lie exponential is injective on this domain as well.
\end{proof}

Let us look at the particular cases that we are interested in. If we set $G = \SO{n}$, we have that its Lie algebra are the skew-symmetric matrices. Skew-symmetric matrices have purely imaginary eigenvalues. Furthermore, since they are real matrices, their eigenvalues come in conjugate pairs. As such, we have that the exponential map is singular on every matrix in the boundary of the set $U$ defined in~\Cref{prop:domain_of_definition}.

Special orthogonal matrices are those matrices which are similar to block-diagonal matrices with diagonal blocks of the form
\[
B =
\begin{pmatrix}
    \cos\theta & \sin\theta\\
    -\sin\theta & \cos\theta
\end{pmatrix}
\]
for $\theta \in (-\pi, \pi]$. On $\SO{2n+1}$, there is an extra block with a single $1$.

Similarly, skew-symmetric matrices are those matrices which are similar to block-diagonal matrices with diagonal blocks of the form
\[
A =
\begin{pmatrix}
    0 & \theta\\
    -\theta & 0
\end{pmatrix}.
\]
On $\mathfrak{so}\pa{2n+1}$ there is an extra block with a single zero.

This outlines an elementary proof of the fact that the Lie exponential is surjective on $\SO{n}$ and $\U{n}$.

In both cases this shows that the boundary of $U$ has measure zero and that $f(\overline{U}) = G$.

\begin{remark}
The reader familiar with Lie group theory will have noticed that this proof is exactly the standard one for the surjectivity of the exponential map using the Torus theorem, where one proves that all the maximal tori in a compact Lie group are conjugated and that every element of the group lies in some maximal torus, arriving then to the same conclusion but in much more generality.
\end{remark}

\section{Hyperparameters for the Experiments}\label{sec:hyperparameters}

The batch size across all the experiments was $128$. The learning rates for the orthogonal parameters are $10$ times less those for the non-orthogonal parameters. We fixed the seed of both Numpy and Pytorch to be $5544$ for all the experiments for reproducibility. This is the same seed that was used in the experiments in~\cite{helfrich18a}. In~\Cref{tab:hyperparam} we refer to the optimizer and learning rate for the non-orthogonal part of the neural network simply as optimizer and learning rate.

\begin{table}[!ht]
    \centering
    \caption{Hyperparameters for the Experiments in~\Cref{sec:experiments}.}
    \label{tab:hyperparam}
    \begin{tabular}{ll|cccc}
        \toprule
        Dataset & Size & Optimizer & Learning Rate & Orthogonal optimizer & Orthogonal Learning Rate \\
        \midrule
        \midrule
        Copying Problem $L=1000$ &
        \multirow{2}{*}{$190$} &
        \multirow{2}{*}{\rmsprop} &
        $2\cdot 10^{-4}$ &
        \multirow{2}{*}{\rmsprop} &
        $2\cdot 10^{-5}$ \\
        Copying Problem $L = 2000$ & & & $2\cdot 10^{-4}$ & & $2\cdot 10^{-5}$ \\
        \midrule
        \multirow{3}{*}{\mnist} &
        $170$ &
        \multirow{6}{*}{\rmsprop} &
        $7\cdot 10^{-4}$ &
        \multirow{6}{*}{\rmsprop} &
        $7\cdot 10^{-5}$ \\
        & $360$ & & $5\cdot 10^{-4}$ & & $5\cdot 10^{-5}$ \\
        & $512$ & & $3\cdot 10^{-4}$ & & $3\cdot 10^{-5}$ \\
        \multirow{3}{*}{\pmnist} &
        $170$ &
        &
        $10^{-3}$ &
        &
        $10^{-4}$ \\
        & $360$ & & $7\cdot 10^{-4}$ & & $7\cdot 10^{-5}$\\
        & $512$ & & $5\cdot 10^{-4}$ & & $5\cdot 10^{-5}$ \\
        \midrule
        \multirow{3}{*}{\timit} &
        $224$ &
        \multirow{3}{*}{\adam} &
        $10^{-3}$ &
        \multirow{3}{*}{\rmsprop} &
        $10^{-4}$ \\
        & $322$ & & $7\cdot 10^{-4}$ & & $7\cdot 10^{-5}$ \\
        & $425$ & & $7\cdot 10^{-4}$ & & $7\cdot 10^{-5}$ \\
        \bottomrule
    \end{tabular}
\end{table}

\end{document}